\renewcommand{\emptyset}{\varnothing}
\renewcommand{\le}{\leqslant}
\renewcommand{\ge}{\geqslant}
\newcommand{\bsx}{\boldsymbol{x}}
\newcommand{\bsz}{\boldsymbol{z}}
\newcommand{\real}{\mathbb{R}}
\newcommand{\cx}{\mathcal{X}}
\newcommand{\bszero}{\boldsymbol{0}}
\newcommand{\bsone}{\boldsymbol{1}}
\newcommand{\e}{\mathbb{E}}
\newcommand{\rd}{\,\mathrm{d}}
\newcommand{\phm}{\phantom{-}}
\newcommand{\phz}{\phantom{0}}
\newcommand{\giv}{\!\mid\!}
\newcommand{\dunif}{\mathsf{U}}
\newcommand{\tran}{\mathsf{T}}
\newcommand{\cmark}{\ding{51}}%
\newtheorem{theorem}{Theorem}
\theoremstyle{definition}
\newtheorem{remark}{Remark}  
\newtheorem{corollary}{Corollary}
\author[1]{Naofumi Hama}
\author[1]{Masayoshi Mase}
\author[2]{Art B. Owen}
\affil[1]{Hitachi, Ltd.\\Research \& Development Group}
\affil[2]{Department of Statistics\\Stanford University}
\affil[ ]{\textit {naofumi.hama.hd@hitachi.com, masayoshi.mase.mh@hitachi.com, owen@stanford.edu}}
\date{April 2023}
\title{Model free variable importance for high dimensional data}
\begin{document}
\maketitle
\begin{abstract}

    A model-agnostic variable importance method
    can be used with arbitrary prediction functions.  Here we
    present some model-free
    methods that do not require access to the prediction
    function. This is useful when that function is
    proprietary and not available, or just extremely
    expensive.  It is also useful when studying
    residuals from a model.
    The cohort Shapley (CS) method is
    model-free but has exponential cost in the dimension
    of the input space.
    A supervised on-manifold Shapley method from \cite{frye2020shapley}
    is also model
    free but requires as input a second black box model
    that has to be trained for the Shapley value problem.
    We introduce an
    integrated gradient (IG) version of cohort Shapley,
    called IGCS, with cost $\mathcal{O}(nd)$.
    We show
    that over the vast majority of the relevant
    unit cube that the IGCS value function is close
    to a multilinear function for which IGCS matches
    CS.  Another benefit of IGCS is that is allows
    IG methods to be used with binary predictors.
    We use some area between curves (ABC) measures to
    quantify the performance of IGCS.  On a
    problem from high energy physics we verify
    that IGCS has nearly the same ABCs as
    CS does.  We also use it on a problem from
    computational chemistry in 1024 variables.
    We see there that IGCS
    attains much higher ABCs than we get from
    Monte Carlo sampling.
    The code is publicly available at \url{https://github.com/cohortshapley/cohortintgrad}.

\end{abstract}

\section{Introduction}

Quantifying the importance of predictor variables is a first
step in explainable AI.  As noted by \cite{hooker2018benchmark},
and others, there is no ground truth notion of importance. One
must instead choose a definition based on such factors as
computational feasibility, whether it is defined in terms of `off manifold' data
when the underlying features exhibit strong dependence,
whether a model must be differentiable,
and whether the method can attribute importance
to an unused variable which correlates with one that is used,
and yet more factors.
People can reasonably disagree
about whether some of these properties are beneficial
or constitute flaws, and we include some discussion of those
points below.

Methods from game theory, notably the Shapley value \citep{shap:1953} and the Aumann-Shapley value \citep{auma:shap:1974} are becoming widely used in defining measures
of variable importance.
When one can reasonably cast the variable importance problem in terms
of a game, then
there are persuasive (though still debatable) axioms that
lead to a unique definition.
The SHapley Additive exPlanations (SHAP)
approach of \citep{lundberg2017unified} and
the Integrated Gradients (IG)
of \cite{sundararajan2017axiomatic}
are widely cited methods derived from Shapley and Aumann-Shapley
values, respectively.

We assume throughout that there are predictor variables
$\bsx_i$ with $d$ components (not necessarily real-valued),
a response $y_i\in\real$ and a function $f$ where $f(\bsx_i)\in\real$
is the prediction of $y_i$.  There are subjects $i=1,\dots,n$
and for one or more target subjects $t$, we want to quantify
how important $x_{tj}$ is for $f(\bsx_t)$.

One important distinction between methods is whether they are
model-based,  model-agnostic, or model-free.
For example, TreeSHAP \citep{lundberg2020local} can
be computed very efficiently but only for models that have
a tree structure, so it is model-based.
By contrast, techniques such
as the baseline Shapley method in \cite{sundararajan2020many}
can be used on any black
box model $f(\cdot)$, and hence are called model-agnostic.
In this paper we consider model-free variable importance
methods that use only the observed values $(\bsx_i,f(\bsx_i))$
or even $(\bsx_i,y_i)$
for $i=1,\dots,n$ to quantify the importance of components
$x_{tj}$ on the prediction $f(\bsx_t)$ or the response $y_t$.

The cohort Shapley (CS) method of \cite{mase2019explaining} is
model-free. However, exact computation of the CS values
has a cost that is exponential in the number $d$ of variables
within $\bsx_i$.
Integrated gradient methods have been proposed for Shapley
value by \cite{sundararajan2017axiomatic}.
These are versions of the Aumann-Shapley value \citep{auma:shap:1974}.
In this paper we construct an integrated gradients
approximation to CS, which we call IGCS.
IGCS has a cost that is $O(nd)$ for $n$ observations.
We found one other model-free method,
namely the surrogate Shapley method of \cite{frye2020shapley}.
More precisely, it
is a framework for constructing
model-free methods, rather than a model-free method itself, as we explain in Section~\ref{sec:rel}, and it has exponential cost.
While this paper was being reviewed,
another model-free approach was proposed.
\cite{ahn2023local} use local polynomial approximation
and bootstrapping to judge variable importance
for a prediction.

There are several reasons to select a model-free method.
\cite{mase2021fairness} study the COMPAS model
predicting recidivism, from
\cite{angw:2016}.
In this case, only the predictions themselves are available
because the underlying prediction function is proprietary and not
available to researchers.  As a result, methods that require
access to the function $f(\cdot)$ cannot be used but CS is
model-free and hence could be used to address some algorithmic
fairness issues.
A second reason to use model-free methods is to study variable
importance for given data $(\bsx_i,y_i)$ without reference
to any model.  A variable that is important for predicting $y_i$
is then one that we reasonably expect to play a role in a good
prediction model for $y$.  A third motivation for model-free
methods is to learn what variables are important for
residuals $y_i -f(\bsx_i)$ or the absolute values
or squares of those residuals.
Even when we have access to $f(\cdot)$
we still do not have a way to study residuals
at points $\bsx$ where we do not have the corresponding $y$
value.  If a variable is important for residuals, that provides
a hint that the model has some flaw that can be improved.
In the given data framework it is possible that two or
more observations have identical $\bsx$ but different $y$.
Both CS and IGCS can handle this setting.  That is, in the
list of $(\bsx_i,y_i)$ pairs it is not necessary for $y$
to be a function of $\bsx$.

While the original IG method requires differentiability
of the prediction function $f$, IGCS does not.
It is based on weighted means of observed function
values and the necessary gradient is defined in
terms of a differentiable weight function.
\cite{hama2022deletion} describe an interpolation
method for IG with binary variables, but it
has exponential cost unlike IGCS.
Indeed, as noted above, IGCS only needs
values $f(\bsx_i)$ or simply $y_i$ and $\bsx_i$
can be anything that $f$ can produce a real value for.

The remainder of this paper is organized as following.
Section~\ref{sec:rel} cites some related
works in the explainable AI (XAI)
literature.
Section~\ref{sec:formalization} introduces
some notation,
reviews Shapley value, cohort Shapley and integrated gradients.
It defines our IGCS proposal by using soft versions
of the similarity function in cohort Shapley.
We also introduce what to our knowledge
is a more general setting where integrated gradients
match Shapley value beyond the settings from
\cite{owen1972multilinear} or \cite{sundararajan2017axiomatic}.
Section~\ref{sec:asymptote} shows via Taylor
expansion that the value function underlying
IGCS is very nearly one where IGCS matches
CS over the vast majority of its input
space provided that the dimension $d$
satisfies $d\gg\log(n)$ for $n$ data points.
This finding is obtained under very general conditions
on the data, and does not need assumptions strongly
tuned to the model form that the usual integrated gradients
method does.
Section~\ref{sec:experiment} conducts experiments to confirm the feasibility of IGCS. In a 16-dimensional
problem from high energy physics it performs almost
as well as CS.  In a 1024-dimensional problem from
computational chemistry it performs much better than
a Monte Carlo strategy and the results we inspected
made sense in terms of basic chemistry.
There are some conclusions in Section~\ref{sec:conclusion}.
Some appendices provide further details and results.

\section{Related Works}
\label{sec:rel}
We are proposing a variable importance
measure that is model-free and scales to large dimensions.
Here we outline some parts of the enormous literature
on variable importance methods, emphasizing
methods related to our proposal.
We introduce some concepts that distinguish variable
importance methods from each other and we describe
some of the key properties of those methods and
tabulate which methods in a comparison group have
those properties.

To get a sense of the size of that literature,
the survey of \cite{wei:lu:song:2015} on variable
importance in statistics has 197 references of which
24 are themselves surveys and  \cite{raza:etal:2021} cite
over 350 references in global sensitivity analysis.  For
surveys in machine learning, see \cite{moln:2018}
and \cite{chen2022explaining} who focus on methods that like
our proposal have a rationale from game theory.
There is even a debate over whether models should be
explained at all, instead of just using models that are
themselves interpretable \citep{rudin2019stop}.
However, given that hard to interpret models are widely
used, we think it makes
sense to seek interpretations of them.

At first sight, variable importance should be simple.
We can change the inputs to $f$ as we like and
recompute it.
A variable that when changed brings large changes to
$f$ is more important than one that brings small changes
or no changes at all.  The difficulty, and the reason
for the vast literature, is that there is a combinatorially
large number of possible ways to do this. We could make
small or large changes to each $x_{tj}$. For each $j'\ne j,$ we
could leave $x_{tj'}$ fixed while changing $x_{tj}$
or we could change it
in one of several ways.  We need to choose a way
to aggregate importance over all the changes we will make
to $x_{tj}$ and we may need to choose how
to aggregate again over all $n$ target
inputs $\bsx_t$ in our problem.  These approaches typically
differ in how they define importance, not how they
estimate some well-accepted definition of importance.
This is why there is no ground truth for variable importance
\citep{hooker2018benchmark}.  Choosing a
correct definition is, as \cite{noimpossible} remark, a problem
of identifying the causes of some effects, instead of the more usual
machine learning setting where we study the effects of causes.
\cite{dawi:musi:2021} describe the extreme difficulty that
identifying causes of effects brings to data science.
If $f(\bsx_t)=1$ instead of $0$, there can be many different
changes to $\bsx_t$, affecting one or more components
$x_{tj}$ that would have brought the other outcome.  Accounting
for all of those counterfactuals is what makes it hard to
explain why $f(\bsx_t)=1$ instead of $0$.

The choice of a variable importance method can be based on how
well some properties of the underlying or implied definition
align with our goals.  For large $d$, we would prefer a method
with sub-exponential cost.
We might prefer a method that obeys some intuitive
axioms, like those underlying the Shapley value (see Section~\ref{sec:shapley}).
When studying algorithmic fairness we would want
a method that can detect redlining by attributing
importance to variables not actually used in $f$,
whereas some global sensitivity analysis
researchers do not want that \citep{heri:etal:2022}.
See also \cite{kuma:2020}.

A variable importance measure can be `local',
explaining the prediction $f(\bsx_t)$ for one observation
$t$ or `global', explaining the
value of variable $j$ in getting an accurate model
for all observations $i=1,\dots,n$.  We focus on local methods.
A method can be based on fixing some components of $\bsx_t$
while intervening to vary the others. Such methods
are called `interventional'. Or, a method can be based
on fixing some set of components and summarizing
the effects of randomly sampling the others. We might for example,
report the conditional mean of $f(\bsx_t)$ given the values
of the fixed components.  Such methods are called `conditional'.
We prefer conditional methods but we include some
interventional ones in our comparisons.
A conditional method can be
`on-manifold' attempting to only use $f$ at realistic inputs
$\bsx$, such as values scattered around a manifold within
the input space, or it can be `off-manifold', computing with arbitrary combinations of input components.

A problem with off-manifold methods is that
they can use problematic input combinations
unlike anything seen in past data or
even possible in future data.
Both~\cite{hooker2019please} and \cite{mase2019explaining}
emphasize this point.
On the other hand, on-manifold methods are more challenging to compute.
Because it can be hard to be sure where exactly
the data manifold lies in high dimensional settings,
methods that use only actually observed data
can more reliably avoid problematic inputs.

Another criterion is whether the method to explain the
importance of a variable in a black box function
makes use of some other black box function in order
to choose some counterfactual inputs.  We prefer
to avoid second black boxes as they may require
their own explanations.

Table~\ref{tab:affordances} shows a grid of properties
of some variable importance measures
with game theoretic motivations.
There is not space to describe all of the methods.
For a recent survey of the area see \cite{chen2022explaining}.
We say a bit more about some methods in Section~\ref{sec:formalization},
when we introduce our proposed method.

\begin{table}[t]
    \centering
    \begin{tabular}{lcccccccccc}
        \toprule
        Property         & CS     & BS     & TS     & KS     & IG     & OM     & SS     & CKS    & eGKW            
                         & IGCS                                                                                    \\
        \midrule
        Real obs. only   & \cmark &        & \cmark &        &        & \cmark & \cmark &        & \cmark & \cmark \\
        $x$ dependence   & \cmark &        &        &        &        & \cmark & \cmark & \cmark & \cmark & \cmark \\
        Unused variables & \cmark &        &        &        &        & \cmark & \cmark &        & \cmark & \cmark \\
        Model-agnostic   & \cmark & \cmark &        & \cmark &        & \cmark & \cmark & \cmark & \cmark & \cmark \\
        Model-free       & \cmark &        &        &        &        &        & \cmark &        & \cmark & \cmark \\
        Sub-exponential  &        &        & \cmark & \cmark & \cmark &        &        & \cmark & \cmark & \cmark \\
        Exact Shapley    & \cmark & \cmark & \cmark &        &        &        &        &        &                 \\
        No 2nd black box & \cmark & \cmark & \cmark &        & \cmark &        &        &        &        & \cmark \\
        Automatic        &        &        & \cmark &        &        &        &        &        &        &        \\
        \bottomrule
    \end{tabular}
    \caption{
        \label{tab:affordances}
        Summary of properties of variable importance methods: cohort Shapley \citep{mase2019explaining}, baseline Shapley \citep{sundararajan2020many, chen2022explaining}, TreeSHAP \citep{lundberg2020local},  KernelSHAP \citep{lundberg2017unified},  integrated gradients \citep{sundararajan2017axiomatic}, On-manifold \citep{frye2020shapley}, surrogate SHAP \citep{frye2020shapley} and Conditional Kernel SHAP \citep{aas2021explaining}, empirical Gaussian Kernel Weight (Appendix~\ref{app:egkw})
        and our integrated gradient cohort Shapley proposal.
        Properties from top to bottom:
        uses only observed predictor combinations,
        uses no independence assumption on inputs,
        can attribute importance to unused variables,
        can use any $f$, does not need $f$,
        cost of exact computation is not
        exponential in $d$,
        computes a Shapley value instead of
        an approximation,
        does not use a second black box to explain $f$, and
        requires no user input.
    }
\end{table}

The supervised on-manifold method of~\cite{frye2020shapley},
labeled SS in Table~\ref{tab:affordances} is a framework
for constructing model-free Shapley values.
They train a neural network to predict $f(\bsx_i)$
given predictors $\tilde\bsx_i(u)$ where $u$ is a
set of predictor indices.
Here $\tilde\bsx_i(u)$ is a vector formed by
replacing $\bsx_{ij}$ by an NA value
(not available) for
all predictors $j\in u$.  In the notation
we introduce in Section~\ref{sec:formalization},
$\tilde\bsx_i(u)=\bsx_{0,u}{:}\bsx_{i,-u}$
where $\bsx_0$ is a tuple of $d$ NA values.
Their NA value was $-1$
which never appears in the data sets they study.
They train with a loss function based on accurately
predicting $f(\bsx_i)$ given, uniformly distributed
data points $i$, a Shapley value-derived distribution on $u$
and a distribution for $x_{ij}$ when $j\not\in u$.
There are many choices for the neural network to use,
the choice could be customized to the data set at
hand, and their framework could be used with other
algorithms other than neural networks.  They train with
squared error loss, which is natural given that the
Shapley value they want is defined as an expectation.
We label this method as exponential as
there is an exponentially large number of subsets
$u$ to consider and corresponding models to train.

The empirical GKW method is our adaptation
of the Gaussian kernel weight method of \cite{aas2021explaining}.
See Appendix~\ref{app:egkw}.  The original GKW is
model-agnostic but not model-free.

The statistics literature has several measures of variable
importance that can be defined in terms of the joint distribution of
random variables without requiring a parametric form for any
predictions, and they are in that sense
model-free. These include the maximal correlation
of \cite{renyi1959measures}, the energy statistics of
\cite{szekely2013energy}, the conditional dependence measures
of \cite{azadkia2021simple}, the floodgate method of
\cite{zhang2020floodgate}.
These are commonly defined through conditional variances
and do not give additive local explanations of $f(\bsx_t)$ for
a given subject $t$, so they don't fit into our
comparisons.

\section{Notation, Background and the Method}\label{sec:formalization}

Variable importance measures
study a mapping $f:\cx\to\real$ that makes a
prediction based on input data $\bsx\in\cx = \prod_{j=1}^d\cx_j$.
A generic point $\bsx\in\cx$ is written $\bsx=(x_1,x_2,\dots,x_d)$.
We have $n$ observations $\bsx_1,\dots,\bsx_n$
and the goal is to quantify importance of the $d$ components
of $\bsx_t$ to $f(\bsx_t)$ (or $y_t$) where $t$ is the `target'
observation.
Because we are working in a model-free setting we
only need to compute with the values $(\bsx_i,f(\bsx_i))$
for $i=1,\dots,n$.  We can also relax the assumption
that $f$ is a mapping.  For instance, we can replace
values $f(\bsx_i)$ by values $y_i$ where $\bsx_i=\bsx_{i'}$
need not imply $y_i=y_{i'}$.

We use $[d]$ for the set $\{1,\dots,d\}$.
For $u\subseteq[d]$ we write $|u|$ for the cardinality
of $u$ and $-u$ for $[d]\setminus u$. For singletons
we often write $-j$ in subscripts instead of $-\{j\}$.
The subvector $\bsx_u$ has all components $x_j$
for $j\in u$ and no others.  We write $\bsx_{tu}$
for the corresponding subvector of $\bsx_t$.
Sometimes we put a comma
in the subscripts, as in $x_{t,-u}$ to make it clear
that we are not subtracting $u$ from $t$.
We need some notation for hybrid points
that merge components from two input vectors.
The point denoted by $\bsx_u{:}\tilde\bsx_{-u}\in\cx$
has components $x_j$ for $j\in u$ and $\tilde x_j$
when $j\not\in u$.  For instance, it is what you get
if you start with $\tilde\bsx$ and substitute $x_j$ for $\tilde x_j$
for every $j\in u$.

Because we are attributing importance of $f(\bsx_t)$
to the components $x_{tj}$ for $j=1,\dots,d$ the problem
is one of `local importance'.  As noted above,
this is distinct from
`global importance' problems of identifying which variables
must be included in the model in order to get accurate
predictions.

We will use a weight space $[0,1]^d$. In this space
we write $\bsone=(1,1,\dots,1)$ and $\bszero=(0,0,\dots,0)$.
Those two points belong to $\{0,1\}^d$ which we describe as
the set of `corners' of $[0,1]^d$. A general corner is
of the form $\bsone_u{:}\bszero_{-u}$ for $u\subseteq[d]$.

\subsection{Shapley Value and CS}\label{sec:shapley}
We begin by describing the Shapley value that underlies
the game theoretic methods.
In cooperative game theory, we suppose that a team of
$d$ players has created some value that we must
attribute to them as individuals. We assume
that we know the value $\nu(u)$ that would have been
created by any subteam $u\subseteq[d]$ of those participants
and suppose for now that $\nu(\emptyset)=0$.
If $j\not \in u$ then the incremental value from adding player
$j$ to team $u$ is
$$\nu(j\giv u)=\nu(u\cup\{j\})-\nu(u).$$
The Shapley value for player $j$ is the following weighted sum of its incremental values:
\begin{align}\label{eq:shapval}
    \phi_j = \frac1d\sum_{u\subseteq-j}
    {d-1\choose |u|}^{-1}\nu(j\giv u).
\end{align}
We can also get this value by building a team from
$\emptyset$ to $[d]$ in one of $d!$ orders, adding
one player at a time, and taking $\phi_j$ to be the
average incremental value from the addition of player $j$
over those $d!$ orders.

Shapley derived the value $\phi_j$ in~\eqref{eq:shapval}
as the unique solution compatible with four axiomatic criteria:
\begin{compactenum}
    \item {\bf Efficiency} $\sum_j\phi_j=\nu([d])$.
    \item {\bf Dummy} If $\nu(j\giv u)=0$ for all $u\subseteq-j$ then $\phi_j=0$.
    \item {\bf Symmetry} If $\nu(j\giv u)=\nu(j'\giv u)$ when
    $u\subseteq-\{j,j'\}$ then $\phi_j=\phi_{j'}$.
    \item {\bf Additivity} If $\nu$ and $\nu'$
    have values $\phi_j$  and $\phi'_j$ then $\nu+\nu'$
    has values $\phi_j+\phi'_j$.
\end{compactenum}
\smallskip
We find it convenient to drop the assumption that
$\nu(\emptyset)=0$. Then the efficiency condition
generalizes to $\sum_{j=1}^d\phi_j = \nu([d])-\nu(\emptyset)$.
That is, we are still explaining $\nu([d])-\nu(\emptyset)$
without being concerned over whether $\nu(\emptyset)=0$.
Shapley value provides an additive attribution because
the individual values $\phi_j$ sum to
the quantity being explained.

Using Shapley value we can derive importances for all $d$
variables by analogy where $\nu(u)$ is a measure of how
well the variables in $\bsx_u$ explain $f(\bsx)$.
For instance the LMG measure of \cite{lind:mere:gold:1980}
is the Shapley value when $\nu(u)$ is the $R^2$ value in
a linear model relating $y$ to $\bsx_u$. This is a global
measure of variable importance for all observations
not just the target $\bsx_t$.

An easily described local measure is baseline Shapley
where $\nu(u)=f(\bsx_{t,u}{:}\bsx_{b,-u})$.
Here $\bsx_t$ is the target point and we seek to explain
the difference between $f(\bsx_t)$ and $f(\bsx_b)$
where $\bsx_b\in\cx$ is some baseline or default point.
It could be a real point or the average of all $n$ points.
Baseline Shapley is considered an `interventional' method
because it is implemented by intervening to change some
of the components of $\bsx_b$ to those of $\bsx_t$.

Another measure is conditional expectation Shapley
where $\nu(u) = \e(f(\bsx_t)\giv \bsx_{tu})$.
More precisely, conditional expectation Shapley is
a family of measures because multiple
choices for the conditional distribution of $\bsx_t$
given $\bsx_{tu}$ have been considered.
Random Baseline Shapley methods \citep{sundararajan2020many, lundberg2017unified}
assume that the components of $\bsx_{t,-u}$ are drawn
independently from the marginal distributions of $\bsx$,
so they are generally off-manifold.
\cite{aas2021explaining} estimate the conditional distribution using kernel
methods.

One difficulty with baseline Shapley and other interventional methods
is that they can use extremely unlikely or even physically impossible variable
combinations.  If the point $\bsx_{t,u}{:}\bsx_{b,-u}$ is
completely impossible then the value $f(\bsx_{t,u}{:}\bsx_{b,-u})$
may be meaningless and hence not fit for use.
The cohort Shapley method of \cite{mase2019explaining} was constructed
to avoid using impossible combinations.  It does so by only using
actually observed data.  It begins with user-specified notions of
whether the value $x_{ij}$ is similar to $x_{tj}$.  For binary features
or those with a small number of levels it is natural to take
similarity to mean that $x_{ij}=x_{tj}$.  For continuously distributed
variables, one can take similarity to mean $|x_{ij}-x_{tj}|\le\delta_j$
or one can discretize $\cx_j$ into a modest number of buckets.
It must always be true that $x_{tj}$ is similar to itself.
If all $d$ variables have similarity defined in terms of equality,
perhaps after discretization, then CS becomes conditional expectation
Shapley based on the empirical distribution of $\bsx_i$.
The requirement to define similarity is a burden.
However, alternatives that use a black box to estimate
conditional expectations are less transparent in that
the effective definition of similarity is obscured.

We let $S_j(\bsx_i)$ be $1$ if $x_{ij}$ is similar to $x_{tj}$ and
and $0$ otherwise.
For $u\subseteq[d]$, we let $S_u(\bsx_i)=\prod_{j\in u}S_j(\bsx_i)$
with $S_\emptyset=1$ by convention.
The value function for CS is
$$
    \nu(u)=\nu(u;t) = \frac{\sum_{i=1}^n f(\bsx_i)S_u(\bsx_i)}{\sum_{i=1}^nS_u(\bsx_i)}.
$$
It is the average value of $f(\bsx_i)$ over the cohort
$C_u:=\{i\in[n]\mid S_u(\bsx_i)=1\}$ of observations similar to the
target $\bsx_t$ for all $j\in u$.  For $\bsx_i\in C_u$
we might or might not have $S_j(\bsx_i)=1$ for some $j\not\in u$.
The cohort mean $\nu(u)$ is always well defined because
$C_u$ is never empty, as $t\in C_u$.  With this definition, the Shapley
value explains
\begin{align}\label{eq:whatcsexplains}
    \frac1{C_{[d]}}\sum_{i\in C_{[d]}}f(\bsx_i)-\frac1n\sum_{i=1}^nf(\bsx_i).
\end{align}
The mean on the left of~\eqref{eq:whatcsexplains}
is the average of $f(\bsx_i)$ over observations $i$
in the `fully refined cohort' $C_{[d]}$ of observations similar to $\bsx_t$
on all $d$ variables.  Very commonly $C_{[d]}=\{t\}$.  Then CS attributes
the difference $f(\bsx_t)-(1/n)\sum_{i=1}^nf(\bsx_i)$ to the $d$
individual input variables through the Shapley values $\phi_j$ that
correspond to $\nu$.

The set of Shapley values $\phi_1,\dots,\phi_d$ depend on
$d2^{d-1}$ incremental values.  For special constructions of $f$
there are faster algorithms, with TreeSHAP \citep{lundberg2020local} being a well
known special case, but for model-agnostic Shapley values, we
cannot avoid the exponential cost.

\subsection{Integrated Gradients}

Integrated gradient (IG) methods are an alternative variable
importance measure that does not have exponential cost.
This approach has its own set of axioms which have
some similarities and some differences when compared
to the Shapley axioms.
Our interest is centered on using IG to compute an approximate
Shapley value so we don't list the IG axioms.
Those axioms are given in \cite{friedman2004paths}
for game theoretic coverage.
See \cite{sun2011axiomatic} and \cite{sundararajan2017axiomatic}
for variable-importance contexts.

Suppose that we have a baseline point $\bsx_b\in\real^d$
and a target point $\bsx_t\in\real^d$ and we want to
explain the difference $\tilde f(\bsx_t)-\tilde f(\bsx_b)$, through
an additive attribution to the components of the input points.
Here $\tilde f$ is a black box prediction function that
we will soon transform to a more convenient function denoted by $f$.
The contexts in \cite{sundararajan2017axiomatic}
include object recognition in images
of $d$ pixels and natural language settings where
there are $d$ words. In such cases, $d$ is large enough
that the exponential cost
of Shapley value forces one to approximate it.
The baseline $\bsx_b$ might be a null image (all black or
all white) or it might be a null document (all word
counts are zero).

Let $\tilde f$ be differentiable with gradient
$$
    \nabla \tilde f(\bsx) =
    \begin{pmatrix}
        \frac\partial{\partial x_1}\tilde f(\bsx),\dots,
        \frac\partial{\partial x_d}\tilde f(\bsx)
    \end{pmatrix}
$$
at any point $\bsx$ where
$\min(x_{bj},x_{tj})\le x_j\le\max(x_{bj},x_{tj})$
holds for all $j\in[d]$.  Then the integrated gradient
importance measure for variable $j$ is defined to be
$$
    \psi_j = (x_{tj}-x_{bj})\int_0^1\nabla \tilde f(\bsx_b+\alpha(\bsx_t-\bsx_b))\rd \alpha
$$
for all $j\in[d]$.
In game theory, this is known as the Aumann-Shapley
index after \cite{auma:shap:1974}.
We readily find by the fundamental theorem of calculus that
$\tilde f(\bsx_t)-f(\bsx_b)=\sum_{j=1}^d\psi_j$,
so IG satisfies the efficiency axiom of Shapley value.
In practice the integral can be estimated by a Riemann
sum.

For our purposes it is convenient to introduce
$f(\bsz) = \tilde f(\bsx_b+\bsz(\bsx_t-\bsx_b))$
for points $\bsz\in[0,1]^d$. Now our function of
interest is defined on the unit cube with
baseline point $\bszero$ and target point $\bsone$
at opposite corners of $[0,1]^d$.  The integrated gradient measures $\psi_j$
for $f$ are the same as the ones for $\tilde f$,
by the chain rule, so
$$
    \psi_j = \int_0^1 \nabla f(\alpha\bsone)\rd\alpha,\quad j\in[d].
$$
We integrate the gradient of $f$ from $\bszero$
to $\bsone$ along a path given by the main diagonal
in the unit cube.  IG is then a path integral method. See
\cite{friedman2004paths} and \cite{sundararajan2017axiomatic}
for more general path integral methods.

Now suppose that we have a value function
$\nu(u) = \tilde f(\bsx_{t,u}{:}\bsx_{b,-u})$.
This is commonly called baseline Shapley \citep{sundararajan2020many}.
Then $\nu(u)=f(\bsone_u{:}\bszero_{-u})$
and the Shapley values $\phi_j$ derived from $\nu(\cdot)$
depend only on the values that $f$ takes on the
set $\{0,1\}^d$ of corners of the cube $[0,1]^d$.

It is known that $\psi_j=\phi_j$ (the Shapley value)
under certain conditions on $f$.
\cite{sun2011axiomatic} note that
this match happens for functions $f:[0,1]^d\to\real$
that are multilinear such as $\prod_{j\in u}z_j$ or
functions such as $\prod_{j\in u}z_j\prod_{j\not\in u}(1-z_j)$
that can be expressed as sums of multilinear functions
or smooth additive
functions $\sum_{j=1}^dg_j(x_j)$.

We show next that the agreement generalizes beyond those above
mentioned cases.
We have not seen this more general agreement in the literature.
While it may be previously known in game theory, it
does not appear in \cite{owen1972multilinear}
and was not known in the XAI literature as of \cite{sundararajan2017axiomatic}.
Our argument uses each Shapley axiom once.
\begin{theorem}\label{thm:prodmatch}
    Let $h:[0,1]\to\real$ be a differentiable function
    and for nonempty $u\subseteq[d]$ let $f(\bsz)=\prod_{j\in u}h(z_j)$.
    Then integrated gradients $\psi_j$ for $f(\bsz)$ on $[0,1]^d$
    match the Shapley values $\phi_j$ for $\nu(u)=f(\bsone_u{:}\bszero_{-u})$.
\end{theorem}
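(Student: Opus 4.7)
The plan is to pin down both $\phi_j$ and $\psi_j$ coordinate by coordinate using three of the Shapley axioms (Dummy, Symmetry, and Efficiency), all of which integrated gradients also satisfy on this $f$, and then to observe that the two prescriptions give the same scalar answer. The key structural fact I will exploit is that $f$ has a single symmetric block of genuine dependence indexed by $u$ and is constant in the remaining coordinates, which partitions $[d]$ into exactly two orbits under the axioms.

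For a coordinate $j \notin u$, the function $f(\bsz) = \prod_{k \in u} h(z_k)$ does not depend on $z_j$. Hence $\nu(u' \cup \{j\}) = \nu(u')$ for every $u' \subseteq -j$, so Dummy yields $\phi_j = 0$; and $\partial f/\partial z_j \equiv 0$ on $[0,1]^d$, so the defining integral for $\psi_j$ gives $\psi_j = 0$ directly.

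For coordinates $j, j' \in u$, the product is invariant under swapping $z_j$ with $z_{j'}$, so $\nu(u' \cup \{j\}) = \nu(u' \cup \{j'\})$ for every $u' \subseteq -\{j,j'\}$ and Symmetry forces $\phi_j = \phi_{j'}$. That same swap preserves $f$ and leaves the diagonal path $\alpha \mapsto \alpha \bsone$ invariant, carrying $\partial f/\partial z_j$ to $\partial f/\partial z_{j'}$ pointwise along it, so $\psi_j = \psi_{j'}$ as well. Call these common values $\phi^\star$ and $\psi^\star$.

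Efficiency then closes the argument on both sides. On the Shapley side,
\begin{equation*}
|u|\,\phi^\star \;=\; \nu([d]) - \nu(\emptyset) \;=\; h(1)^{|u|} - h(0)^{|u|}.
\end{equation*}
On the integrated-gradient side, the fundamental theorem of calculus applied to $\alpha \mapsto f(\alpha \bsone)$ gives
\begin{equation*}
|u|\,\psi^\star \;=\; f(\bsone) - f(\bszero) \;=\; h(1)^{|u|} - h(0)^{|u|}.
\end{equation*}
Since $u$ is nonempty, dividing by $|u|$ yields $\phi^\star = \psi^\star$, and combined with the vanishing on $-u$ this proves the theorem. There is no real technical obstacle here: the only thing worth checking carefully is that integrated gradients genuinely obeys Dummy, Symmetry, and Efficiency on this $f$, and each property is immediate from the form of the path integral along the main diagonal. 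The conceptual content is simply that any $f$ whose variable dependence organizes into one symmetric block is pinned down entirely by those three axioms, without needing the multilinear structure of \cite{owen1972multilinear} or the smooth-additive structure discussed in \cite{sundararajan2017axiomatic}.
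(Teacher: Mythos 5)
Your proof is correct and follows essentially the same route as the paper's: the Shapley side is pinned down by the Dummy, Symmetry, and Efficiency axioms exactly as in the paper's argument, and the two coordinate orbits ($j\in u$ versus $j\notin u$) are handled identically. The only cosmetic difference is on the integrated-gradients side, where the paper evaluates $\psi_j=\int_0^1 h'(\alpha)h(\alpha)^{|u|-1}\rd\alpha$ directly by recognizing the antiderivative $h(\alpha)^{|u|}/|u|$, whereas you reach the same number by combining the permutation invariance of the diagonal path with the fundamental theorem of calculus applied to $\alpha\mapsto h(\alpha)^{|u|}$ --- the identical integration, repackaged as an efficiency property of integrated gradients.
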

\begin{proof}
    From the dummy axiom, $\phi_j$ is zero for $j\not\in u$.
    From the symmetry axiom $\phi_j$ for $j\in u$ are all equal.
    From the efficiency axiom they must sum to $h(1)^{|u|}-h(0)^{|u|}$.
    Therefore
    $\phi_j=1_{j\in u}(h(1)^{|u|}-h(0)^{|u|})/|u|$.
    For $j\in u$
    $$
        \psi_j = \int_0^1 h'(z)h(z)^{|u|-1}\rd z
        =\Bigl[\frac{h(z)^{|u|}}{|u|}\Bigr]^1_0=\frac{h(1)^{|u|}-h(0)^{|u|}}{|u|}=\phi_j
    $$
    while for $j\not\in u$ the $j$'th component of $\nabla f$ is zero,
    and then $\psi_j=0=\phi_j$.
\end{proof}

\begin{remark}
    The function $h$ does not have to be differentiable.  It is
    enough for $h^{|u|}$ to be absolutely continuous on $[0,1]$.
    Absolute continuity here means that $h^{|u|}$ has a
    Lebesgue integrable anti-derivative
    $g$ with $\int_0^ag(\alpha)\rd\alpha = h(a)^{|u|}$ for all $a\in[0,1]$.
    Then $h^{|u|}$ would be differentiable almost everywhere in $[0,1]$.
    For instance, if $h=\max(0,x-c)$ for some $c\in(0,1)$ then
    $h^{|u|}$ is absolutely continuous for all $|u|\ge1$
    but is only differentiable for $|u|\ge2$.  More generally,
    $h$ could be any continuous piece-wise linear function.
    Without a gradient, we would estimate $\psi_j$ by
    $$
        \sum_{r=0}^{R-1} f\Bigl( \frac{r}{R}\bsone + \frac1R
        e_j\Bigr)-f\Bigl( \frac{r}R\bsone\Bigr)
    $$
    for some large integer $R$, where $e_j$ is the $j$'th
    coordinate vector, $\bsone_j{:}\bszero_{-j}$.
\end{remark}

\begin{corollary}\label{cor:generalmatch}
    Let
    $$
        f(\bsz) = \sum_{\emptyset\ne u\subseteq [d]}\sum_{\ell=1}^{L_u}\prod_{j\in u}
        h_{u,\ell}(z_j)
    $$
    for arbitrary differentiable functions $h_{u,\ell}$ on $[0,1]$.
    Then the integrated gradients $\psi_j$ for $f(\bsz)$ on $[0,1]^d$
    match the Shapley values $\phi_j$ for $\nu(u)=f(\bsone_u{:}\bszero_{-u})$.
\end{corollary}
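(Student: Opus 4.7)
The plan is to reduce the corollary to Theorem~\ref{thm:prodmatch} by exploiting that both integrated gradients and Shapley value are linear in the underlying object (the function $f$ for IG, the value function $\nu$ for Shapley). Since $f$ is already presented as a finite sum of product terms, there is nothing to do beyond decomposing cleanly, applying the theorem to each piece, and reassembling.

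First I would define, for each pair $(u,\ell)$ appearing in the sum, the atomic function $f_{u,\ell}(\bsz)=\prod_{j\in u}h_{u,\ell}(z_j)$ and the associated value function $\nu_{u,\ell}(u')=f_{u,\ell}(\bsone_{u'}{:}\bszero_{-u'})$. Theorem~\ref{thm:prodmatch} applies directly to each $f_{u,\ell}$, giving $\psi_j(f_{u,\ell})=\phi_j(\nu_{u,\ell})$ for every $j\in[d]$.

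Next I would handle the IG side: by linearity of the gradient and of the integral along the diagonal path,
\begin{equation*}
\psi_j(f) = \int_0^1 \fracpartial{}{z_j} f(\alpha\bsone)\rd\alpha
= \sum_{u,\ell}\int_0^1\fracpartial{}{z_j} f_{u,\ell}(\alpha\bsone)\rd\alpha = \sum_{u,\ell}\psi_j(f_{u,\ell}).
\end{equation*}
For the Shapley side, observe that the overall value function satisfies $\nu(u')=f(\bsone_{u'}{:}\bszero_{-u'})=\sum_{u,\ell}\nu_{u,\ell}(u')$, so by the additivity axiom of Shapley value (applied inductively across the finite sum) we get $\phi_j(\nu)=\sum_{u,\ell}\phi_j(\nu_{u,\ell})$. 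Chaining the three equalities yields $\psi_j(f)=\phi_j(\nu)$, as desired.

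There is no real obstacle here; the only subtlety worth flagging is the convention $\nu(\emptyset)$. Since we are summing over nonempty $u$ and each $\nu_{u,\ell}$ vanishes at $\emptyset$ only in the convention discussed after the axioms, one should confirm that the generalized efficiency statement $\sum_j\phi_j=\nu([d])-\nu(\emptyset)$ is what additivity propagates, which it does term by term. If one wants to weaken differentiability to absolute continuity of each $h_{u,\ell}^{|u|}$ as in the remark, the same decomposition argument goes through verbatim using the Riemann-sum form of IG.
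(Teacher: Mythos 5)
Your proposal is correct and takes essentially the same route as the paper, which proves the corollary in one line by invoking Theorem~\ref{thm:prodmatch} together with the additivity axiom of Shapley value (plus the implicit linearity of integrated gradients in $f$). Your write-up simply makes those two linearity steps and the $\nu(\emptyset)$ convention explicit, which is a faithful elaboration rather than a different argument.
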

\begin{proof}
    The result follows from Theorem~\ref{thm:prodmatch} and the
    additivity axiom of Shapley value.
\end{proof}
We can of course add a constant term ($u=\emptyset$) to the function $f(\bsz)$ above.
For $|u|=1$ we do not need $L_u>1$, but for $|u|>1$ allowing
$L_u>1$ provides a generalization.

\begin{remark}
    We see from Corollary~\ref{cor:generalmatch} how $f$
    can contain a general smooth additive component (as
    was well known).
    For instance, if $f(\bsx)=\sum_{j=1}^dg_j(x_j)$ for
    smooth functions $g_j$,
    then by taking $h_{\{j\},1}=g_j$
    we see that IG matches Shapley value.  Interactions are
    more constrained.  For two-factor interactions, bilinear
    forms like $x_1x_2$ or $(x_1-c_1)(x_2-c_2))$ for constants $c_j$
    were already known to have IG match Shapley. More general quantities,
    for instance $x_1x_2+\exp(x_1)\exp(x_2)+\cos(x_1)\cos(x_2)$
    or sums of products of piece-wise linear continuous
    functions are now included in Corollary~\ref{cor:generalmatch}
    by defining the corresponding $h$'s appropriately.  This
    provides some increased generality.  All such interaction
    functions covered by the sufficient condition
    in Corollary~\ref{cor:generalmatch}
    are invariant under permutations of their arguments,
    so they are still quite special.
\end{remark}

\subsection{Integrated Gradients for Cohort Shapley}

Here we construct an IG method for CS.
Let $\nu(u)$ be the value function in CS.
We will take some notational liberty in
extending the domain of $\nu$ from sets
$u\subseteq[d]$ to corresponding points $\bsz\in\{0,1\}^d$
and then from there to a differentiable
function of $\bsz\in[0,1]^d$.  Then we apply
IG to this extended $\nu$.
The first step is to write $\nu(\bsone_u{:}\bszero_{-u})=\nu(u)$.
Then the cohort Shapley value function for the set $u$
is placed at the corner of $\{0,1\}^d$ that has $z_j=1$
for $j\in u$ and $z_j=0$ for $j\not\in u$.

To get an approximation to CS that scales to high dimensional
settings, we introduce a soft similarity function
$$
    s_{\bsz}(\bsx_i)=s_{\bsz}(\bsx_i;t) = \prod_{j=1}^d\bigl(
    1+z_j(S_j(\bsx_i)-1)\bigr)
$$
for $\bsz\in[0,1]^d$. Here $1+z_j(S_j(\bsx_i)-1)$ is
a soft version of $S_j(\bsx_i)$ that linearly interpolates from
$1$ to $S_j(\bsx_i)\in\{0,1\}$ as  $z_j$ increases from $0$ to $1$.
Soft similarity satisfies $s_{\bsone_u{:}\bszero_{-u}}(\bsx_i)=S_u(\bsx_i)$.
We can now extend $\nu$ from $\bsz\in\{0,1\}^d$ to $\bsz \in [0, 1]^d$ via
\begin{align}\label{eq:average_by_sz}
    \nu(\bsz) = \left. \sum^n_{i=1}f(\bsx_i)s_{\bsz}(\bsx_i) \Bigm/ \sum^n_{i=1}s_{\bsz}(\bsx_i).\right.
\end{align}
We have used $\nu(\cdot)$ instead of $f(\cdot)$
for our value function so that we can use
$f(\cdot)$ to denote the value of the
prediction at a data point.  In IG those two functions
can be the same, but for CSIG they must be different.

We interpret the numerator of $\nu(\bsz)$ as a soft total
and the denominator as a soft cardinality.
Note that while the components $z_j$ are real values in $[0,1]$,
the data features $x_j\in\cx$ do not have to be real-valued.
As a result, the soft value function $\nu $ in~\eqref{eq:average_by_sz}
can be defined for very general features.
The quantity explained is
$\nu(\bsone)-\nu(\bszero)$ which reduces to the
fully refined cohort mean minus the sample mean
as in CS.  The cohort Shapley value function for the set
$u$ equals the soft cohort value $\nu(\bsone_u{:}\bszero_{-u})$.

The integrated gradient cohort Shapley (IGCS) values are
given by
$$
    (\psi_1,\dots,\psi_d) = \int_0^1\nabla\nu(\alpha\bsone)\rd \alpha
    \quad\text{where}\quad\nabla\nu(\bsz)=\begin{pmatrix}
        \frac\partial{\partial z_1}\nu(\bsz),\dots,\frac\partial{\partial z_d}\nu(\bsz)
    \end{pmatrix}.
$$
The partial derivatives we need depend on
$$
    s_{\bsz}^{(k)}(\bsx_i) \equiv \frac{\partial}{\partial z_k} s_{\bsz}(\bsx_i) = (S_k(\bsx_{i})-1)\prod_{j\in -k} (1+z_j (S_j(\bsx_{i})-1)).
$$
Now
$$
    \frac{\partial}{\partial z_k} \nu(\bsz) = \frac{(\sum^n_{i=1}f(\bsx_i)s^{(k)}_{\bsz} (\bsx_i))(\sum^n_{i=1} s_{\bsz}(\bsx_i))-(\sum^n_{i=1}f(\bsx_i)s_{\bsz} (\bsx_i))(\sum^n_{i=1} s^{(k)}_{\bsz}(\bsx_i))}{(\sum^n_{i=1} s_{\bsz}(\bsx_i))^2}.
$$
This partial derivative can be computed at cost $\mathcal{O}(nd)$.
Integrating it with a quadrature rule on $R$ nodes then costs $\mathcal{O}(nRd)$.
Now suppose that we evaluate the gradient at a point $\alpha\bsone$. We get
\begin{align*}
    s_{\alpha\bsone}(\bsx_i)       & = \prod^d_{j=1}(1+\alpha(S_j(\bsx_{i})-1)),\quad\text{and}      \\
    s_{\alpha\bsone}^{(k)}(\bsx_i) & = (S_k(\bsx_{i})-1) \prod_{j\in -k}(1+\alpha(S_j(\bsx_{i})-1)).
\end{align*}
The components of the gradient are rational functions in $\alpha$ with a numerator degree of $2d - 1$ and a denominator degree of $2d$.
We integrate them via an equally weighted
average over equispaced points $\alpha_r\in[0,1]$
for $r=1,\dots,R$.

The variable importance interpretation of $\psi_j$
for CSIG is similar to that of $\phi_j$ for CS.
In CS, we remove observations with dissimilar
values of $x_{ij}$ from the cohort when
we refine on variable $j$, and take account of
the resulting changes in the cohort mean.  In CSIG we
continuously downweight such observations as $\alpha$
increases from $0$ to $1$, and track the resulting
changes in the soft cohort mean.  In both cases, importance
derives from aggregated changes to the corresponding mean
as a distribution on $\bsx_i$ concentrates more nearly
to $\bsx_t$.

\section{CS Versus IGCS for Large $d$}
\label{sec:asymptote}

In the CERN example of Section~\ref{sec:experiment},
with $d=16$, we will see that cohort Shapley and integrated gradient
cohort Shapley attain very nearly the same
performance.  That is quantified by an
area between the curves  (ABC) quantity
derived from the area under the curve (AUC)
quantity of \cite{petsiuk2018rise}. See Appendix~\ref{sec:abc}.
We know that IGCS cannot always
match CS because CS requires exponential computation
and IGCS does not.
In this section we explain why CS and
IGCS can be expected to be very close to each
other in high dimensional settings
where CS cannot be computed, such as a $1024$-dimensional
example from computational chemistry, also in Section~\ref{sec:experiment}.

While both the numerator and denominator of $\nu(\bsz)$
are multilinear functions where the integrated gradient recovers
the Shapley values,
this is not true of the ratio itself.  We use a Taylor expansion
of the soft cohort mean around the point $\bsone\in[0,1]^d$
to explain how the ratio is very nearly multilinear
over most, though not all, of $[0,1]^d$.

It helps to introduce dissimilarity sets
$$
    J_i = \bigl\{j\in[d]\mid S_j(\bsx_i)=0\bigr\}.
$$
Then the soft similarity for observation $i$ is
$$
    s_{\bsz}(\bsx_i) = \prod_{j\in J_i}(1-z_j).
$$
Now we make a formal Taylor series expansion
\begin{align}\label{eq:formaltaylor}
    \nu(\bsz) & = \Bigl(
    f(\bsx_t)+\sum_{i\ne t}f(\bsx_i)\prod_{j\in J_i}(1-z_j)
    \Bigr)\Bigm/
    \Bigl(1+\sum_{i\ne t}s_{\bsz}(\bsx_i)\Bigr)\notag \\
              & =
    \Bigl(
    f(\bsx_t)+\sum_{i\ne t}f(\bsx_i)\prod_{j\in J_i}(1-z_j)
    \Bigr)
    \Bigl( 1+\sum_{r=1}^\infty
    \Bigl(-\sum_{i\ne t}s_{\bsz}(\bsx_i)\Bigr)^r
    \Bigr).
\end{align}
This formal expansion fails to converge
for some values of $\bsz$.  We next give conditions
under which it converges for the vast majority
of $[0,1]^d$.  Those conditions
keep the soft cardinality (our denominator)
between $1$ and $1+\epsilon$ over
the vast majority of $[0,1]^d$. The lower bound is $1$ because
the target point $\bsx_t$ is always counted.

We assume that the fraction of variables $j\in[d]$ for
which $\bsx_i$ is dissimilar to $\bsx_t$
belongs to a sub-interval of $(0,1]$ as follows:
\begin{align}\label{eq:atoA}
    ad \le |J_i|\le Ad
\end{align}
for constants $0<a\le A\le1$.

\begin{theorem}\label{thm:itmostlyconverges}
    Let there be $n$ observations $\bsx_i$
    for $i\ne t$
    and suppose that equation~\eqref{eq:atoA} holds
    for the target point $\bsx_t$ and similarity
    functions $S_j(\bsx_i)$.
    For $0<\epsilon<1$ let
    \begin{align}\label{eq:defheps}
        H_\epsilon =
        \Bigl\{\bsz\in[0,1]^d\mid \sum_{i\ne t}\prod_{j\in J_i}(1-z_j)\ge\epsilon\Bigr\}.
    \end{align}
    Then for $\bsz\sim\dunif[0,1]^d$,
    $$
        \Pr(\bsz\in H_\epsilon)\le
        \frac{n^2}\epsilon\exp\Bigl(-\frac{\lfloor ad\rfloor}4\Bigr).
    $$
\end{theorem}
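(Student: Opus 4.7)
The plan is to apply Markov's inequality directly to the random variable $X(\bsz)=\sum_{i\ne t}\prod_{j\in J_i}(1-z_j)$, which is exactly the soft-cardinality excess defining membership in $H_\epsilon$. Since $\bsz\sim\dunif[0,1]^d$ has independent components, and each factor $\prod_{j\in J_i}(1-z_j)$ is a product over a fixed index set $J_i$, the expectation factorizes:
\[
\e\Bigl[\prod_{j\in J_i}(1-z_j)\Bigr]=\prod_{j\in J_i}\e[1-z_j]=\Bigl(\tfrac12\Bigr)^{|J_i|}.
\]
Summing over $i\ne t$ and using the lower bound $|J_i|\ge ad$ (which implies $|J_i|\ge\lfloor ad\rfloor$ since $|J_i|$ is an integer), I get $\e[X]\le (n-1)\,2^{-\lfloor ad\rfloor}\le n\,2^{-\lfloor ad\rfloor}$.

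Then by Markov,
\[
\Pr(\bsz\in H_\epsilon)=\Pr\bigl(X(\bsz)\ge\epsilon\bigr)\le \frac{\e[X]}{\epsilon}\le \frac{n}{\epsilon}\,2^{-\lfloor ad\rfloor}.
\]
To match the form of the theorem, I use $2^{-\lfloor ad\rfloor}=e^{-\lfloor ad\rfloor\ln 2}\le e^{-\lfloor ad\rfloor/4}$ since $\ln 2>1/4$, and then bound $n\le n^2$. This yields the claimed bound $n^2 \epsilon^{-1}\exp(-\lfloor ad\rfloor/4)$; in fact the argument produces something strictly stronger (a single factor of $n$ and an exponential constant $\ln 2$ rather than $1/4$), so there is slack to absorb any convention about whether $i=t$ is included and for making the statement visually consistent with the $\epsilon\in(0,1)$ regime of interest.

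There is really no main obstacle; the only points that need care are (i) justifying the factorization of the expectation, which uses independence of the uniform coordinates $z_j$ and the fact that each $1-z_j$ appears at most once per term because $J_i$ is a set, and (ii) turning the data-dependent lower bound $|J_i|\ge ad$ into the uniform constant $2^{-\lfloor ad\rfloor}$ that can be pulled out of the sum over $i$. Both are immediate from the hypotheses, so the proof is essentially a one-line Markov computation followed by a trivial inequality to reshape constants.
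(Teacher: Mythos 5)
Your proof is correct, and it takes a genuinely different and more elementary route than the paper's. You apply Markov's inequality directly to the sum $X(\bsz)=\sum_{i\ne t}\prod_{j\in J_i}(1-z_j)$, using independence of the coordinates to get $\e\bigl[\prod_{j\in J_i}(1-z_j)\bigr]=2^{-|J_i|}\le 2^{-\lfloor ad\rfloor}$, which yields $\Pr(\bsz\in H_\epsilon)\le (n/\epsilon)\,2^{-\lfloor ad\rfloor}$; since $\ln 2>1/4$ and $n\le n^2$, this is strictly stronger than the stated bound. The paper instead first passes to the maximum via a union bound ($\sum_{i\ne t}(\cdots)\ge\epsilon$ forces some single product to exceed $\epsilon/n$), observes that $-2\sum_{j=1}^{\lfloor ad\rfloor}\log(1-z_j)\sim\chi^2_{(2\lfloor ad\rfloor)}$, and invokes the Laurent--Massart lower-tail inequality, which after loosening gives the $n^2\epsilon^{-1}\exp(-\lfloor ad\rfloor/4)$ form. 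What your approach buys is simplicity and a better constant in the exponent with only a single factor of $n$; what the paper's approach could buy (though the authors loosen it away) is a much sharper estimate in the regime $\lfloor ad\rfloor\gg\log(n/\epsilon)$, since the exact $\chi^2$ lower-tail probability $\Pr\bigl(\chi^2_{(2m)}\le 2\log(n/\epsilon)\bigr)$ decays like $(\log(n/\epsilon))^{m}/m!$, i.e.\ super-exponentially in $m=\lfloor ad\rfloor$, whereas a first-moment bound can never beat $2^{-m}$. The only cosmetic issue in your write-up is the factor $n-1$ versus $n$: the theorem posits $n$ observations with $i\ne t$, so the sum has $n$ terms, but as you note the slack $n\le n^2$ absorbs this either way.
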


\begin{proof}
    We will use the fact that minus twice the
    sum of the logs of $m$ independent $\dunif(0,1)$
    random variables has the $\chi^2_{(m)}$ distribution.
    We get
    \begin{align*}
        \Pr( \bsz\in H_\epsilon)
         & \le\Pr\biggl( \max_{i\ne t}\prod_{j\in J_i}(1-z_j) \ge\frac\epsilon{n}\biggr)        \\
         & \le n\Pr\Biggl(\, \prod_{j=1}^{\lfloor ad\rfloor}(1-z_j)\ge
        \frac\epsilon{n}\Biggr)                                                                 \\
         & =n\Pr\biggl(-2\sum_{j=1}^{\lfloor ad\rfloor}\log(1-z_j)\le -2\log(\epsilon/n)\biggr) \\
         & = n\Pr( \chi^2_{(2\lfloor ad\rfloor)}\le 2\log(n/\epsilon)).
    \end{align*}
    Now equation (4.4) of \cite{laur:mass:2000} shows that
    for any integer $m\ge1$ and any $x>0$
    $$
        \Pr( \chi^2_{(m)} \le m - 2\sqrt{m x})\le e^{-x}.
    $$
    Taking $m=2\lfloor ad\rfloor$ and
    $x = (m-2\log(n/\epsilon))^2/(4m)$
    we get
    \begin{align*}
        \Pr(\bsz\in H_\epsilon)
         & \le n\exp\Bigl(-\frac{(m-2\log(n/\epsilon))^2}{4m}\Bigr) \\
         & \le n\exp\Bigl(-
        \frac{\lfloor ad\rfloor}4
        +\log\Bigl(\frac{n}\epsilon\Bigr)
        \Bigr).
    \end{align*}
    Moving $\log(n/\epsilon)$ out of the exponent completes the proof.
\end{proof}

As a result, we conclude that
in a setting with $d\gg \log(n)$, the Taylor
expansion~\eqref{eq:formaltaylor}
is convergent over all but a trivially
small part of the unit cube.
The theorem above is for $n$ points in addition
to the target point $\bsx_t$.  For $n-1$ such
points the bound is the somewhat less elegant
$(n-1)^2\exp(-\lfloor ad\rfloor/4)/\epsilon$.

\begin{remark}
    The bound in Theorem~\ref{thm:itmostlyconverges} does
    not involve the constant $A$.  This means that observations
    where $\bsx_i$ is dissimilar to $\bsx_t$ for most or even all
    features are not detrimental to the convergence of the
    Taylor series.
\end{remark}

\begin{remark}
    At the other extreme, it
    is possible that some point $\bsx_i$ is essentially
    a duplicate of $\bsx_t$ and is then similar to $\bsx_t$
    for all $d$ predictor variables.
    We write $i\sim t$ for that case and $i\not\sim t$
    otherwise.
    Suppose that there
    are $n_1\ge1$ points identical to $\bsx_t$ (including $\bsx_t$ itself) and  $n_2=n-n_1$ other points satisfying
    equation~\eqref{eq:atoA}.
    Then the soft cohort mean is
    \begin{align*}
         & \Bigl(\sum_{i:i\sim t}f(\bsx_i) +\sum_{i:i\not\sim t}f(\bsx_i)\prod_{j\in J_i}(1-z_j)\Bigr)\Bigm/
        \Bigl(n_1+\sum_{i:i\not\sim t}\prod_{j\in J_i}(1-z_j)\Bigr)                                          \\
         & =\Bigl(\bar f+\frac1{n_1}\sum_{i:i\not\sim t}f(\bsx_i)\prod_{j\in J_i}(1-z_j)\Bigr)\Bigm/
        \Bigl(1+\frac1{n_1}\sum_{i:i\not\sim t}\prod_{j\in J_i}(1-z_j)\Bigr)
    \end{align*}
    where $\bar f=\sum_{i:i\sim t}f(\bsx_i)/n_1$.
    The Taylor expansion in equation~\eqref{eq:formaltaylor}
    is convergent over the vast majority of $[0,1]^d$
    for this soft cohort mean too, by the same argument
    we used in Theorem~\ref{thm:itmostlyconverges}.
    This holds whether $n_1=O(1)$ or whether
    $n_1$ grows proportionally to $n$.
    We just need $d\gg \log(n_2)$.
\end{remark}

Now we turn to the corners of the cube
used in the definition of cohort Shapley
and show that the Taylor expansion is
convergent for the vast majority of them.
Clearly $\bszero\in H_\epsilon$ and $\bsone\not\in H_\epsilon$.
We let $\bsz=\bsone_u{:}\bszero_{-u}\in\{0,1\}^d$.

\begin{theorem}
    Let there be $n$ observations including $\bsx_t$
    and suppose that equation~\eqref{eq:atoA} holds
    for the target point $\bsx_t$ and similarity
    functions $S_j(\bsx_i)$.
    Then the Taylor
    expansion~\eqref{eq:formaltaylor} is convergent for at least
    $$2^d\Bigl(1-\frac{n}{2^{da}}\Bigr)$$
    of the points in
    $\{0,1\}^d.$
\end{theorem}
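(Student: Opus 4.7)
The plan is to recognize that at a corner $\bsz=\bsone_u{:}\bszero_{-u}$, each soft similarity $s_{\bsz}(\bsx_i)=\prod_{j\in J_i}(1-z_j)$ is either $0$ or $1$ depending on whether $u$ meets $J_i$, so the sum appearing in the inner geometric series of~\eqref{eq:formaltaylor} is a nonnegative integer. The geometric series $\sum_{r\ge 1}(-x)^r$ converges iff $|x|<1$, and since our $x$ is a nonnegative integer, this forces $x=0$. Thus the expansion converges at exactly those corners where no observation $i\ne t$ satisfies $J_i\subseteq -u$.

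First I would write out the corner calculation: at $\bsz=\bsone_u{:}\bszero_{-u}$, we have $1-z_j=\mathbf{1}\{j\notin u\}$, so $\prod_{j\in J_i}(1-z_j)=1$ iff $J_i\cap u=\emptyset$, i.e., $J_i\subseteq -u$, and is $0$ otherwise. Hence
\begin{equation*}
    \sum_{i\ne t}s_{\bsone_u:\bszero_{-u}}(\bsx_i)=\bigl|\{i\ne t: J_i\cap u=\emptyset\}\bigr|\in\{0,1,2,\dots\},
\end{equation*}
and the Taylor expansion~\eqref{eq:formaltaylor} converges precisely when this count is zero.

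Next I would apply a union bound. A corner $u$ is \emph{bad} if there exists $i\ne t$ with $J_i\cap u=\emptyset$, equivalently $u\subseteq[d]\setminus J_i$. The number of subsets $u\subseteq[d]\setminus J_i$ is $2^{d-|J_i|}$, so the total count of bad corners is at most
\begin{equation*}
    \sum_{i\ne t}2^{d-|J_i|}\le (n-1)\,2^{d-ad}\le \frac{n\cdot 2^d}{2^{da}},
\end{equation*}
using $|J_i|\ge ad$ from~\eqref{eq:atoA} and the fact that there are at most $n-1\le n$ non-target observations. Subtracting from the total $2^d$ yields at least $2^d(1-n/2^{da})$ good corners, matching the claim.

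The argument is essentially a counting union bound, so there is no real obstacle beyond identifying that convergence at corners reduces to an integer-valued sum being zero; the upper bound $A$ in~\eqref{eq:atoA} plays no role, consistent with the earlier remark that observations very dissimilar to $\bsx_t$ do not hurt convergence. The only subtlety worth flagging is that the count of non-target points is $n-1$, which we loosen to $n$ to match the stated bound.
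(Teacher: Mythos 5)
Your proposal is correct and follows essentially the same route as the paper's own proof: identify that at a corner $\bsone_u{:}\bszero_{-u}$ the soft similarity $\prod_{j\in J_i}(1-z_j)$ is the indicator of $u\subseteq J_i^c$, bound the number of such $u$ per observation by $2^{d-|J_i|}\le 2^{d(1-a)}$, and apply a union bound over the at most $n$ non-target observations. Your added observation that the inner sum is a nonnegative integer (so the geometric series converges exactly when it is zero) is a slightly more explicit justification of the convergence criterion than the paper gives, but the counting argument is identical.
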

\begin{proof}
    For a corner point
    $\bsz=\bsone_u{:}\bszero_{-u}\in\{0,1\}^d$ to be in
    $H_\epsilon$ of equation~\eqref{eq:defheps}
    there must be at least one observation $i$
    with $\prod_{j\in J_i}(1-z_j)\ne0$.
    That is $u\subseteq J_i^c$.
    There are at most $2^{d-|J_i|}\le 2^{d(1-a)}$ such
    $u$.  As a result the Taylor expansion is convergent
    for at least $2^d-n2^{d(1-a)}=2^d(1-n/2^{da})$ corners of $[0,1]^d$.
\end{proof}

For any $\epsilon>0$ we have
$$1\le s_{\bsz}\le 1+\epsilon$$
with overwhelming probability for $\bsz\sim\dunif[0,1]^d$.
For such $\bsz$
\begin{align}\label{eq:taylor0}
    \nu(\bsz) \approx
    f(\bsx_t) + \sum_{i\ne t}f(\bsx_i)\prod_{j\in J_i}(1-z_j)
\end{align}
to within a factor of $1+\epsilon$
and the right hand side of~\eqref{eq:taylor0}
is a function for which IGCS matches CS.
Furthermore equation~\eqref{eq:taylor0} holds exactly
at the majority of the corners of $[0,1]^d$.

In this section, we have shown that the IGCS value
function $\nu$ defined on $[0,1]^d$ is one where, for large $d$,
we can expect IGCS to closely match CS.
More precisely, this $\nu$ is very nearly a multilinear
function on the majority of $[0,1]^d$ and at the majority
of corners $\{0,1\}^d$, and integrated gradients match Shapley
value for multilinear functions.  The discrepancy between CS and IGCS
is studied in Appendix~\ref{sec:higherorder},
which looks at the next term in the Taylor expansion.

\section{Experiments}
\label{sec:experiment}
In this section we show the feasibility of IGCS by applying it to real-world datasets.
We use a 16-dimensional example from high energy
physics where we can compare CS to IGCS.
Then we use a 1024-dimensional example
from computational chemistry where CS is
infeasible.

\subsection{Comparison of IGCS to CS for CERN Data}
\label{sec:cern}
Here we compare the performance of IGCS to the Cohort Shapley \citep{mase2019explaining} with a low dimensional dataset.
We evaluate the XAI methods by the insertion and deletion tests \citep{petsiuk2018rise, hama2022deletion}
described in Section~\ref{sec:abc}.
We will consider both interventional and conditional
ABC measures.  Methods that are better at ordering
variables by importance get larger ABC values.


\subsubsection{Dataset and Setup}
\label{sec:cernexp}
The CERN Electron Collision Data
\citep{cerndataset}
is a dataset about dielectron collision events at CERN.
It includes continuous variables
representing the momenta and energy of the electrons,
as well as discrete variables for the charges of the electrons ($\pm 1$: positrons or electrons).
Only the data whose invariant mass of two
electrons (or positrons) is between $2$ and $110$ GeV are included.
We treat it as a regression problem to predict
their invariant mass from the other 16 features.
Our model for this problem
is a neural network described in
Appendix~\ref{sec:cerncolldata}.

To apply CS we must choose similarity functions $S_j(\bsx_i)$.
Define the range of feature $j$ to be $r_j=\max_ix_{ij}-\min_ix_{ij}$.
We take $S_j(\bsx_i)$ to be one if
and only if $|x_{ij}-x_{tj}|\le \delta_jr_j$.
We set each $\delta_j=0.1$ so that
two values of variable $j$ are similar
if they are within 10\% of the range of the variable.

\subsubsection{Comparison Methods}
As noted in Section~\ref{sec:rel}, the
supervised on-manifold method of \cite{frye2020shapley} is a
whole framework
of methods, and it has exponential cost.
It is not clear how to develop
one for this specific application, or how to
train it well as the model it uses combines
$2^{1024}$ different submodels depending on
what variables are missing.
For the purposes of comparison we have modified
the Gaussian kernel weight (GKW) method from
\cite{aas2021explaining} to get a model-free
method.
This modified method
is described in detail in Appendix~\ref{app:egkw}.
It was necessary to make a modification so
that the method could be computed using only
observed values of $f(\bsx_i)$ instead
of unobserved hybrid points.

The uniqueness Shapley criterion of \cite{seiler2021makes} is a Shapley
value based on taking $\nu(u)$ to be $-\log_2$ of the
cardinality of the cohort $C_u$.  Then the Shapley values $\phi_j$
quantify the relative importance of the variables in
separating observation $t$ from the others.  When aggregated
over all observations it becomes a weighted sum of conditional
entropies.  We include it as a comparison because it is
computationally feasible and it allows us to quantify whether
variables for which a similarity condition
more strongly makes the observation unique might also
be important for predicting model output.  That is, a
variable in which $\bsx_t$ tends to be an outlier
might, to some extent, also be a variable that strongly
affects its prediction or its response or
its residual.

The prediction function for the CERN data
is a differentiable function of $16$
continuously varying variables.
That makes it possible to use the original
integrated gradients method on the model
output (but not on the residuals).

The IG method compares a target data point to a baseline point.
This is different from CS which ranks variables by how
they split the point's response value from the set
of all observations' responses.
For IG we took all of the $2000$ points in turn as
the target point $t$ and computed the IG
values from it to the other $1999$ points.
All in all we computed ${2000\choose 2}$
insertion and deletion ABCs.
We do not compute ABCs for kernel SHAP (KS)
because the cost of doing ${2000 \choose 2}$
KS computations is prohibitive.  We have extensive experience
running KS on this same data set \citep{hama2022deletion}.
Based on that experience we expect KS to perform better than IG,
but not much better than IG which was always a close
second to KS in those comparisons.

We also include random variable ordering.
It is known \citep{hama2022deletion}
that under random ordering
the expected value of ABC for either
insertion or deletion is zero when $f$
is additive. Whether $f$ is additive or not,
the expected sum of insertion and deletion
ABCs is zero.
The random ordering used was different for each target point.

\subsubsection{Results for CERN Data}
Table~\ref{tb:diffABC_cern_predicted} shows insertion
and deletion ABCs for the prediction function $f$
on the CERN data. These are conditional ABCs
as described in Section~\ref{sec:abc}.
We include some standard errors, with
respect to the
$2000$ held out points, as a descriptive statistic.
For IG it is a standard error of ${2000\choose 2}$
ABCs.
We see that CS works best but IGCS is close
behind even though there are only $16$ variables.
We also see that the sum of insertion and deletion
ABCs is very close to zero for random ordering,
consistent with theory.  The empirical GKW
method gets just over half the ABC values that CS and
IGCS get.  It outperforms plain IG.  That could be
because IG is an interventional method and
Table~\ref{tb:diffABC_cern_predicted} reflects conditional
(cohort) ABCs.  Uniqueness Shapley gets a low score.
However, that measure does not even use the $f(\bsx_i)$
values and so it is interesting that it can partially identify
variables which move the cohort mean using only
a measure of how outlying the target point is in those variables.

\begin{table}[t]
    \centering
    \begin{tabular}{llcc}
        \toprule
        Test Mode & Method               & Mean           & Std. Error \\
        \midrule
        Insertion & Cohort Shapley       & 10.213$\phz$   & 0.142      \\
                  & IGCS                 & 9.726          & 0.137      \\
                  & Random               & $-$0.461$\phm$ & 0.130      \\
                  & empirical GKW        & 6.569          & 0.139      \\
                  & Uniqueness Shapley   & 1.243          & 0.178      \\
                  & Integrated Gradients & 2.260          & 0.004      \\
        \midrule
        Deletion  & Cohort Shapley       & 9.176          & 0.114      \\
                  & IGCS                 & 8.835          & 0.122      \\
                  & Random               & 0.406          & 0.129      \\
                  & empirical GKW        & 5.866          & 0.106      \\
                  & Uniqueness Shapley   & 3.117          & 0.164      \\
                  & Integrated Gradients & 2.192          & 0.003      \\
        \bottomrule
    \end{tabular}
    \caption{Mean insertion and deletion ABCs for $2000$ of the CERN data points calculated on feature attribution
        to predicted values, rounded to three places.
        The standard errors are with respect to $2000$
        held-out points for all methods except IG
        where they are for ${2000\choose 2}$ pairs of points.
    }
    \label{tb:diffABC_cern_predicted}
\end{table}

Table~\ref{tb:diffABC_cern_residue} shows insertion and deletion
ABCs for the residuals in the model. The ABCs are smaller
than for the prediction itself because the model has fit well.
Again we see highest ABCs for CS with IGCS a close second
and empirical GKW at about half the ABC.  It is not possible to
use the original IG method here because the residuals are not
a differentiable function of the predictors.

\begin{table}[t]
    \centering
    \begin{tabular}{llcc}
        \toprule
        Test Mode & Method             & Mean           & Std. Error \\
        \midrule
        Insertion & Cohort Shapley     & 1.239          & 0.022      \\
                  & IGCS               & 1.155          & 0.021      \\
                  & Random             & $-$0.023$\phm$ & 0.017      \\
                  & empirical GKW      & 0.601          & 0.019      \\
                  & Uniqueness Shapley & 0.108          & 0.023      \\
        \midrule
        Deletion  & Cohort Shapley     & 1.034          & 0.020      \\
                  & IGCS               & 0.993          & 0.020      \\
                  & Random             & 0.019          & 0.018      \\
                  & empirical GKW      & 0.527          & 0.017      \\
                  & Uniqueness Shapley & 0.254          & 0.023      \\
        \bottomrule
    \end{tabular}
    \caption{Mean insertion and deletion ABCs for $2000$ of the CERN data points calculated on feature attribution
        to residuals, the difference between annotated values and predicted values.}
    \label{tb:diffABC_cern_residue}
\end{table}

Our conditional ABCs can be considered unfair to
interventional methods like IG because those
methods favor variables
that bring big changes in interventions but were scored by
how much they moved the cohort mean.
To address this, we computed some interventional ABCs
and we report them in Table \ref{tb:diffABC_cern_interventional}.
For an interventional method we need to choose a set
of baseline-target pairs $(\bsx_b,\bsx_t)$. This is different
from our conditional method comparing $\bsx_t$ to some
average of all data.
The pairs we select are those from
the counterfactual policy in \cite{hama2022deletion}.
That policy chooses the pairs as follows. For a given target $\bsx_t$
we select baselines $\bsx_b$ with:
\begin{compactitem}
    \item $x_{bj}\ne x_{tj}$ for all $j\in[d]$ including both particles' charges,
    \item $\bsx_b$ has one of the $20$ smallest such
    $\Vert\bsx_i-\bsx_t\Vert$ values, and
    \item it maximizes $|f(\bsx_b)-f(\bsx_t)|$ subject to the above.
\end{compactitem}
The rationale for that policy is given in \cite{hama2022deletion}.
Pairs of responses can differ by more than responses differ from
the average and we have also selected pairs that differ greatly
(giving something to explain) so it is not surprising that larger ABCs can
be found for this counterfactual policy.
Because we adopt this policy also for picking reference data in calculation of
IG and KS,
we are able to compute KS in this setting.
It performed best in \cite{hama2022deletion}. IG is
nearly as good.  Conditional methods like CS and IGCS do not score
well on these interventional ABCs. The empirical GKW
method is the best conditional method under this interventional
scorecard.

\begin{table}
    \centering
    \begin{tabular}{llcc}
        \toprule
        Test Mode & Method               & Mean         & Std. Error \\
        \midrule
        Insertion & Cohort Shapley       & 5.807        & 0.169      \\
                  & IGCS                 & 6.077        & 0.171      \\
                  & empirical GKW        & 8.759        & 0.179      \\    &Kernel SHAP& 18.535$\phz$ & 0.215\\
                  & Integrated Gradients & 18.289$\phz$ & 0.213      \\

        \midrule
        Deletion  & Cohort Shapley       & 5.884        & 0.176      \\
                  & IGCS                 & 6.176        & 0.175      \\
                  & empirical GKW        & 9.259        & 0.156      \\
                  & Kernel SHAP          & 16.752$\phz$ & 0.176      \\
                  & Integrated Gradients & 16.315$\phz$ & 0.173      \\
        \bottomrule
    \end{tabular}
    \caption{Mean insertion and deletion ABCs evaluated in interventional measures for $2000$ of the CERN data points calculated on feature attribution
        to predicted values, rounded to three places.
        The scores in other XAI methods are reported in \cite{hama2022deletion}.}
    \label{tb:diffABC_cern_interventional}
\end{table}

\subsection{Feasibility in Data with Many Features}\label{sec:logp_exp}
In this section we show the feasibility of IGCS in a dataset
with 1024 binary features from an experiment in chemoinformatics.
Because the features are binary, it is not difficult
to define similarity. We take $S_j(\bsx_i)=1$ if and
only if $x_{ij}=x_{tj}$.
We cannot compute CS exactly for this problem,
but we can relate the IGCS
findings to some domain knowledge.
We can compute CS by Monte Carlo. We find that the resulting  estimates do not attain
ABC values competitive with IGCS.
We cannot compute IG because the variables are all
binary.

\subsubsection{Dataset and Setup}
For one elementary task in chemoinformatics,
we consider the estimation of ``logP'' that
measures the preference of a molecule to dissolve in lipids
versus water.
This metric is one of the rough standards in medicine
known as Lipinski's rule of five where it is an aspect
of `druglikeness'.
We use DeepChem \citep{Ramsundar-et-al-2019} and RDKit
\url{https://www.rdkit.org/}, a framework for chemoinformatics,
and the data are collected from ZINC15 \citep{sterling2015zinc}.
The response we use are annotated values
that are not experimental values but
are instead calculated as sums of contributions from
fragments of molecules called Wildman-Crippen LogP values \citep{wildman1999prediction}.

The input data for these models
encodes the molecules via the Extended-Connectivity Fingerprints (ECFP) \citep{rogers2010extended}.
Each fingerprint is
a binary vector that represents local information around each
atom in a given molecule.
This local information is transformed by a hash function and stored as a 1024-dimensional binary vector.
Each of these 1024 bits represents whether one or more copies
of the corresponding information is present (bit=$1$) or whether
none exist at all (bit=$0$), i.e., the feature is absent.

The 1024 features we study were derived from
a hash of 45{,}033 original features.
This means that there are hash collisions
of about 44 original features for each bit
in our data.

Before presenting our model, we describe
two papers doing related work.  They are
both about using XAI on similar problems
to the one we discuss.

When two structurally similar compounds have
very different chemoinformatic properties,
it is called an activity cliff.

\cite{tamura2021interpretation} considered blackbox prediction
of activity cliffs.
Their feature set is somewhat different
from the one we use.
ECFP4 is a variant of ECFP that focusses
on nonlocal information.
They concatenate some binary
vectors from ECFP4 to get the features
they use.
They had more than 2000 features in all.
They applied kernel SHAP, presumably with
Monte Carlo sampling given the high dimensional
feature vectors, to their support vector
machine model.
In their application, KS was unsatisfactory.
Because it attributed a lot of importance to absence
of features, the domain experts could
not use the results.
They found better (more interpretable)
results with the Tanimoto
index computed with variables derived from domain knowledge.

\cite{heberle2022xsmiles} compared Crippen values and results of TreeSHAP.
They used the Morgan Fingerprint, which is comparable to ECFP, as input vectors.
They confirmed that the TreeSHAP is ``almost identical'' to Crippen contributions ``relative to their own maximum absolute value'', that is, after normalizing by the maximum attribution.
For this reason, we also compare the IGCS values to assigned Crippen values as a very rough guideline in following.

The model we used to estimate the annotated logP values
is described in Appendix~\ref{sec:logp}.  This model
is very shallow,
containing just one hidden layer with 3935 neurons
to avoid overfitting.
It was fit using mean squared error loss.
The data set we consider is made up of $2000$ molecules
from the test data, that is, molecules that were held out
during fitting of the one layer model.

\subsubsection{Individual Examples in Chemoinformatics}
We chose one molecule at random from the test data,
shown in Figure~\ref{fig:10_str}(a).  After we discuss that
molecule we will consider another molecule, the one for which the model
had its largest error, shown in Figure~\ref{fig:10_str}(b).
We call these ZINC10 and ZINC61 below, which are shortened versions of their IDs.  The pentagonal structures shown near the
left of each molecule are known as `aromatic rings'.  We will
refer to that structure below.
The best-known molecule with an aromatic ring is benzene, which has
a hexagonal aromatic ring composed of six Carbon atoms.

The randomly chosen molecule ZINC10 has annotated logP$=2.8777$.
The average logP over all 2000 molecules is $1.2194$
so we are left to explain a logP difference
of $1.6583$.
With IGCS we get $\sum_{j=1}^{1024}\hat\psi_j=1.6201$.
The discrepancy arises because of numerical quadrature
used to compute the integrated gradient
vector $\hat\psi$.
The predicted logP for this molecule was $2.555$
and we had an average prediction
of $1.2059$ over the $2000$ molecules.
This leaves $2.555-1.2059=1.3491$ to explain
and we got $\sum_{j=1}^{1024}\hat\psi_j=1.3033$.

\begin{figure}[t]
    \centering
    \includegraphics[width=.9\hsize]{./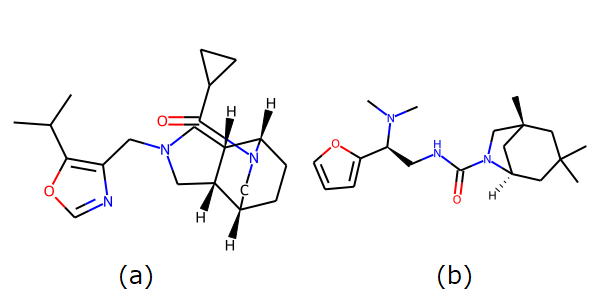}
    \caption{Two examples of structural formula in ZINC15 as
        drawn by RDKit.
        (a): A randomly chosen molecule
        whose ID in ZINC15 is ZINC1089752100.
        (b): The molecule with greatest absolute prediction error.
        The ID in ZINC15 is ZINC611639837.
    }
    \label{fig:10_str}
\end{figure}

The distribution of IGCS values is represented in
the left panel of Figure \ref{fig:10_61}.
The most negative $\hat\psi_j$
for this molecule was $-0.0279$, for the annotated value
and $-0.0310$ for the predicted value.
Both of those were for feature $726$ which was
absent (bit $=$ 0) for this molecule.
Feature 726
was present in $955$ of the $2000$ molecules
and absent in $1045$ of them.
Figure~\ref{fig:feat726}
shows IGCS values $\hat\psi_{726}(t)$ letting
the target $t$ be any of the $2000$ molecules.
We see that very generally $\hat\psi_{726}$ for
the true annotated values is quite close to that
for the predicted values for this feature.  Also,
presence of the feature is strongly linked to increasing
logP while absence is strongly linked to decreasing logP
with perfect separation of the signs for both annotated
and predicted values.
The molecule ZINC10 is shown as the green circle.
If bit 726 is present
it typically represents the existence of a carbon chain
of length three.  Such a chain is denoted by
ccc in
the Simplified Molecular-Input Line-Entry System (SMILES)
notation \citep{weininger1988smiles},
which we also use for some
other chemical features below.

\begin{figure}[t]
    \centering
    \includegraphics[width=\linewidth]{./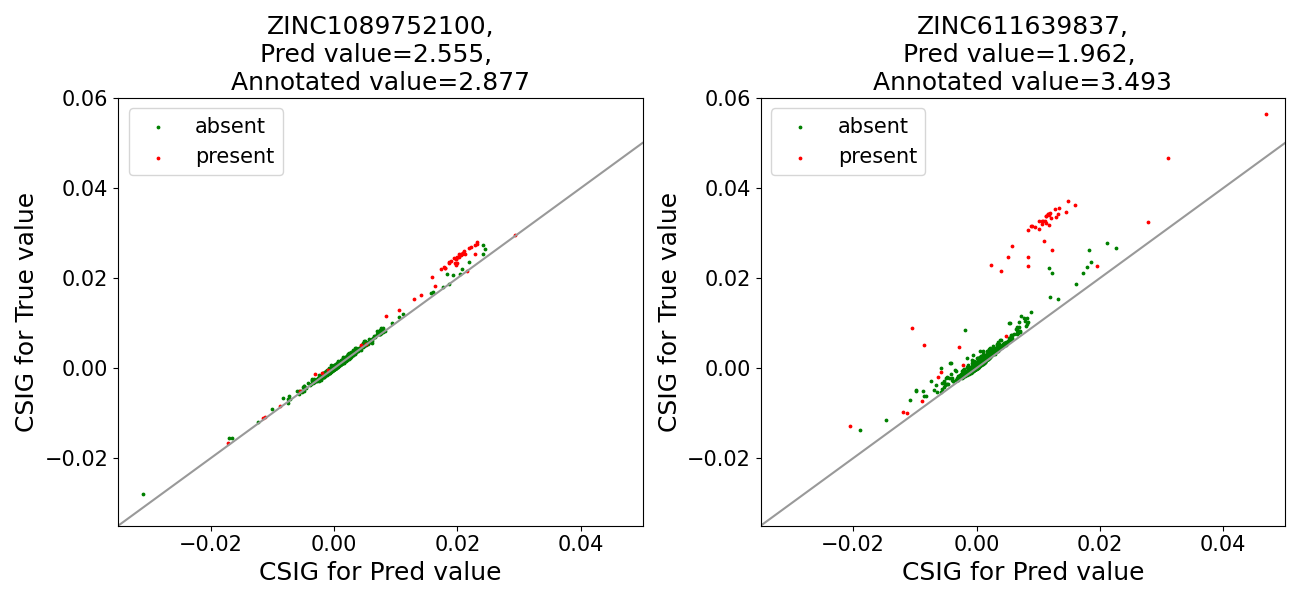}
    \caption{
        The contributions to the predicted value and the annotated value.
        Left: the molecule (a) in Figure \ref{fig:10_str}.
        Right: the molecule (b) in Figure \ref{fig:10_str}.        }
    \label{fig:10_61}
\end{figure}

\begin{figure}[t]
    \centering
    \includegraphics[width=0.5\linewidth]{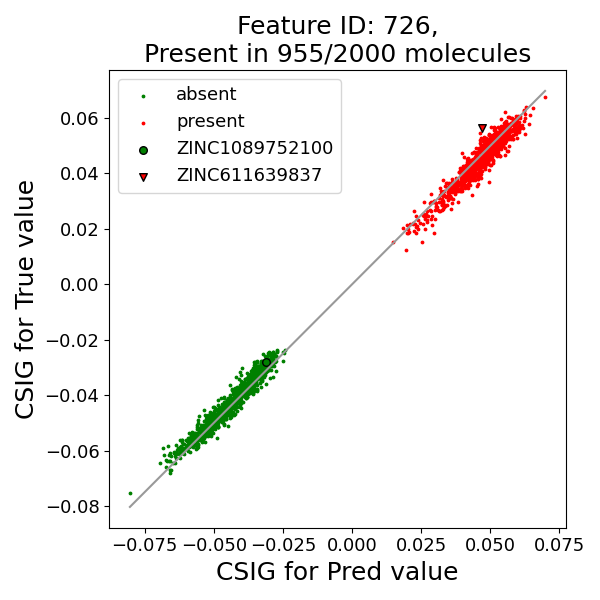}
    \caption{The distribution of IGCS assigned to feature 726 for all molecules in the dataset.
        The molecule of Figure \ref{fig:10_str}(a) is shown as a green circle,
        and the one from Figure \ref{fig:10_61}(b) is shown as a red triangle.
    }
    \label{fig:feat726}
\end{figure}

The second largest negative value
for ZINC10 is $-0.0166$ for the annotated value,
and $-0.0172$ for the predicted value.
This comes from feature $935$ that is present in ZINC10.
That feature represents the presence of Nitrogen
which is known to be a water-soluble part
which thus negatively contributes to logP.
The corresponding Crippen value is $-0.3187$,
and in this molecule there are two fragments of them
so the total contribution to the annotated value
is twice this Crippen value.
There are three $N$ in Figure \ref{fig:10_str}(a),
and the one in the aromatic ring contributes $-0.3239$
to the Crippen value. 

The largest positive $\hat\psi_j$
for ZINC10 is $0.0295$ for the annotated value
and $0.0293$ for the predicted values.
These come
from  feature $849$, which is present in ZINC10, representing a C--H part neighboring to the Nitrogen and the Oxygen in the aromatic ring.
The corresponding Crippen value is $0.1581$.

As shown above,
while the units of ECFP do not match to fragments where Crippen values are annotated and
they cannot be interpreted as ground truth of XAI,
the signs and rough magnitudes of Crippen values are good indicator of
domain knowledge in this field and our results agree with them.

The second molecule, shown in Figure~\ref{fig:10_str}(b)
has largest loss in the test dataset.
We denote that molecule `ZINC61' in this paper.
Before we study the
features that explain this discrepancy we remark on
the features that contribute to ZINC61's values.
The two largest positive contributions to annotated value $\hat\psi_j$
are from features $726$ and $64$, the same as for ZINC10.
They both come from the presence of carbon chains ccc in
the aromatic ring.
Because the neighboring information around
those carbon chains is different, they are
hashed into different features.
The largest negative feature
of IGCS for annotated values at 
ZINC61 is feature $875$
with a zero bit (absence).
Feature $875$ also typically
means the presence of another kind of carbon chain
in the aromatic ring.

The second largest negative feature in ZINC61 feature  $935$.
It has the largest negative IGCS value
among present features.
This feature also appeared in ZINC10 as the second largest negative feature.
It is about presence of N and
the assigned Crippen value for this atom is
$-0.3187$.

The feature where predicted importance falls
short of annotated importance by the greatest amount
is feature $210$.
This feature is present in this molecule.
Presence of feature $210$ is quite rare in the data set
because only 26 of the 2000 molecules have it present.
Presence is also rare for many other features.
While it looks like a very rare feature,
this dataset is highly sparse;
about $1/4$ of the features appear in $26$ or fewer molecules.
More precisely, feature 210 is the 251st `sparsest'
of the 1024 features.
Feature $210$ in this molecule
counts the presence of CCNC(N)=O (in SMILES notation).
The IGCS for annotated value is $0.03160$ and for predicted value is $0.00891$.
The second most underestimated feature is number
$955$. It refers the presence of CNC[C@@H](c)N in this molecule.
The number of molecules whose bit $955$ is one is 22,
but only 4 of them corresponds to CNC[C@@H](c)N, while
the other 18 molecules have presence due
to hash collisions.
In this sense, the various contributions should be assigned to each of the features represented in bit $955$
depending on their properties, but the limited hash length prevents it.
From this viewpoint, one would have to increase the hash length of ECFP
to resolve those differences.

One of the strengths of IGCS
is that we can use it to identify which features are
associated with a large loss between observed and
predicted responses.
See Figure~\ref{fig:10_61}(b).
The vertical distance of each point from diagonal line in this figure gives a good indicator of loss SHAP,
because the loss function is the usual MSE in this task.
We see that the greatest source of underestimation
for ZINC61 comes from the underestimation of
contributions from present features.
The origins of these differences can be from the sparsity of the counts of each feature over the molecules in test dataset.
This kind of information must be exploited to obtain accurate models.
However, there is no universal method to improve AI models.
We can try feature engineering, data augmentation, modify loss function or introduce attention mechanism et cetera.
This area is referred as eXplanatory Interactive machine Learning (XIL) in recent years. See \cite{teso2022leveraging} for a review of XIL.

\subsection{Number of Samples and ABC}
In Section \ref{sec:cern} we compared several methods
in terms of ABC using the CERN data.
The CS results in Section \ref{sec:cern}
could use exact CS because with only $d=16$
variables the cost was manageable.
In the DeepChem setting it is not possible to compute CS.
We are left to choose between IGCS and Monte Carlo (MC)
sampling.
That can be done by sampling from the $d!$ permutations
underlying one expression for Shapley value or
from other methods that just sample from the
$d2^{d-1}$ incremental values.
Sampling permutations is most
straightforward because it can use
uniform (unweighted) sampling.
Here we compare IGCS to Monte Carlo (MC) sampling
via ABC.  The IGCS values will have a bias
because $\psi_j\ne\phi_j$.  MC estimates $\hat\phi_j$
are unbiased but can have high variance.

A comparison is made in Table \ref{tb:abc_mcmc_dc} and
illustrated in Figure \ref{fig:dc_abc_mcmc}.
The averages are over 2000 held out molecules.
The error bars represent $\pm1$ standard error.
This is a standard error over the 2000 molecules
not over the MC simulations.
The results are compared to IGCS with two choices for the number of function values in the Riemann sum.
The IGCS reference lines are surrounded by bands shading
plus or minus one standard error (over 2000 molecules)
of the IGCS values.

The MC estimates show a clear upward trend; as more
MC samples are taken, the estimated ABCs trend steadily
upwards.  While the underlying MC $\hat\phi_j$ are unbiased,
the ABC estimates apply a nonlinear variable ranking
transformation to those estimates so that the MC-based
estimates $\widehat{\mathrm{ABC}}$ are biased.
From Figure~\ref{fig:dc_abc_mcmc} it is evident that
truly enormous MC sampling cost would be required for
the MC-based method to get ABC scores comparable to
those of IGCS.
MC sampling with about the same cost in time as IGCS
using $50$ gradient points to estimate the integral
yields an insertion ABC of $0.052$ compared to
$0.149$ with half a million Monte Carlo samples.
When ABC methods are used to demonstrate a good ordering,
we can see that MC can have severe difficulty.

\begin{table}[t]
    \centering
    \begin{tabular}{rccc}
        \toprule
        Number of & Insertion ABC & Deletion ABC  & Time        \\
        Samples   & (Std. error)  & (Std. error)  & (sec./data) \\
        \midrule
        2049      & 0.052 (0.016) & 0.063 (0.016) & $\phz$0.517 \\
        10000     & 0.028 (0.017) & 0.037 (0.017) & $\phz$1.387 \\
        100000    & 0.071 (0.016) & 0.060 (0.017) & 14.760      \\
        200000    & 0.106 (0.015) & 0.114 (0.016) & 29.154      \\
        300000    & 0.128 (0.014) & 0.144 (0.015) & 44.720      \\
        400000    & 0.146 (0.014) & 0.169 (0.015) & 59.618      \\
        500000    & 0.149 (0.014) & 0.179 (0.015) & 74.503      \\
        \midrule
        50 steps
        IGCS      & 0.318 (0.010) & 0.637 (0.011) & $\phz$0.516 \\
        200 steps
        IGCS      & 0.330 (0.010) & 0.664 (0.011) & $\phz$0.661 \\
        \bottomrule
    \end{tabular}
    \caption{The ABCs with different sampling number in CS and IGCS of ZINC15 data.
        The numbers in the parenthesis represent the standard error in 2000 data.}
    \label{tb:abc_mcmc_dc}
\end{table}

\begin{figure}[t]
    \includegraphics[width=\linewidth]{./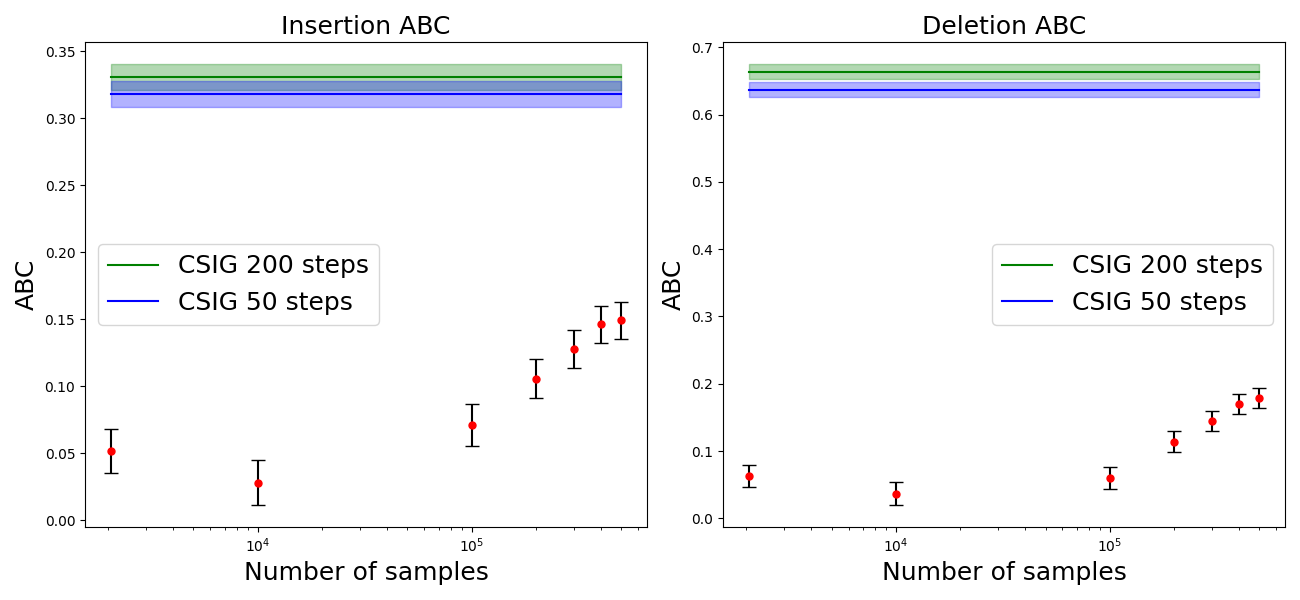}
    \caption{The ABCs with different sampling number in CS of ZINC15.}
    \label{fig:dc_abc_mcmc}
\end{figure}

The computation time is measured in a server with Intel(R) Xeon(R) Gold 6132 CPU @ 2.60GHz and Tesla V100-SXM2.

\section{Conclusion}
\label{sec:conclusion}

We have introduced a new variable importance
measure, IGCS, based on Aumann-Shapley theory.
Like CS it is model-free, but unlike 
CS it
can easily scale to high dimensional settings
without incurring exponential cost.
IGCS has a value function on $[0,1]^d$ that
is essentially multilinear over all but a negligible portion
of the unit cube when $d\gg\log(n)$.
That then makes it close to CS in the high
dimensional settings where exponential cost
rules out exact CS computations.
The disadvantage of IGCS is the same as CS: the
user needs to define what it means for two
levels of a variable to be similar.  While this
is a burden for non-binary variables, a user-defined
notion of similarity is more transparent than having
a notion of similarity that emerges from some other
black box.

This model-free method is useful in settings
where the prediction function $f$ is not available
either because it is proprietary or because it is
prohibitively expensive.  It can also be used
on raw data or on residuals from a model.

\section*{Acknowledgement}
This work was supported by the U.S.\ National Science
Foundation grants IIS-1837931 and DMS-2152780
and by Hitachi, Ltd.  We thank Benjamin Seiler and Masashi Egi for
helpful comments.

\bibliographystyle{apalike}
\bibliography{csig}

\appendix
\section{Detailed Model Descriptions}\label{sec:modeldetails}
This appendix provides some background details on the experiments conducted in this article.

\subsection{CERN Electron Collision Data}\label{sec:cerncolldata}
The hyperparameters for the model used in Section~\ref{sec:cernexp} are
given in Table \ref{tb:modelcern}.
They were obtained from a
hyperparameter search using Optuna from \cite{akiba2019optuna}.
Each  intermediate layer is  a parametric ReLU with dropout. The dropout ratio is common to all of the layers.
The model is trained with Huber loss.
The model is overall very accurate but the very highest values are systematically underestimated,
as seen in \cite{hama2022deletion}
which also used this model.

\begin{table}[htbp]
    \centering
    \begin{tabular}{ll}
        \toprule
        Hyperparameter    & Value                         \\
        \midrule
        Dropout Ratio     & 0.11604                       \\
        Learning Rate     & $1.9163 \times 10^{-4}$       \\
        Number of Neurons & [509--421--65--368--122--477] \\
        Huber Parameter   & $1.0$                         \\
        \bottomrule
    \end{tabular}
    \caption{Parameters of the MLP model for the CERN Electron Collision Data.}
    \label{tb:modelcern}
\end{table}

\subsection{ZINC15 Data}\label{sec:logp}
The model used in Section~\ref{sec:logp_exp} is trained with RobustMultitaskRegressor of \cite{ramsundar2017multitask}
by three tasks annotated in ZINC15 in DeepChem.
The model is trained with mean squared loss.
The hyperparameters for the model are given in Table \ref{tb:modellogp}.
The number of nodes in a hidden layer is determined from a  search using Optuna from \cite{akiba2019optuna}.
The learning rate is exponentially decaying from $0.01$
with a decay rate of $0.8$ per $1000$ steps.

\begin{table}[htbp]
    \centering
    \begin{tabular}{ll}
        \toprule
        Hyperparameter                      & Value  \\
        \midrule
        Dropout Ratio                       & 0.25   \\
        Number of Neurons in a hidden layer & [3935] \\
        \bottomrule
    \end{tabular}
    \caption{Parameters of the MLP model that estimate logP values in ZINC15 from ECFP vectors.}
    \label{tb:modellogp}
\end{table}

\section{Empirical Gaussian Kernel Weight}\label{app:egkw}
Here, we describe our
modification of the Gaussian kernel weight (GKW) method
that we used in Section \ref{sec:experiment}.
The Gaussian kernel weight method of \cite{aas2021explaining}
is not model free.  Here we describe a model-free
adaptation of it.

The value function in GKW takes the form
\begin{align}\label{eq:originalgkw}
    \nu(u) = \frac{\sum_{i=1}^ns_u(\bsx_i,\bsx_t)f(\bsx_{i,u}{:}\bsx_{t,-u})}{\sum_{i=1}^ns_u(\bsx_i,\bsx_t)}
\end{align}
for a weight function
$$
    s_u (\bsx_i, \bsx_t) = \exp\left(- \frac{D_u^2 (\bsx_i, \bsx_t)}{2\sigma^2}\right)
$$
where
$$
    D_u (\bsx_i, \bsx_t) =\sqrt{ \frac{(\bsx_{i,u}- \bsx_{t,u})^\tran \Sigma_{uu}^{-1} (\bsx_{i,u}- \bsx_{t,u})}{|u|}}.
$$
The function $D_u$ is a scaled Mahalanobis distance
from $\bsx_{iu}$ to $\bsx_{tu}$.  It uses
$\Sigma_{uu}$ which is the $|u|\times |u|$ covariance
matrix among the $\bsx_{iu}$ values.
The distances and weights are measured in data space, not in indicator space, so they are available for real-valued features but
perhaps not for other features.

GKW has a parameter $\sigma>0$.
The authors of \cite{aas2021explaining} recommend choosing
it via AIC.  That is expensive and they also report that
a default such as $\sigma=0.1$ works well in practice.

Before modifying GKW we make two observations.
First, the normalization by $|u|$ within $D_u$ is unusual
and it means that the distance from $\bsx_{iu}$ to $\bsx_{tu}$
could actually decrease when we incorporate another
variable, replacing $u$ by $u\cup\{j\}$.
We are not able to say whether this is a strength
or weakness of GKW, just that it is an interesting
propoerty. Second, the value $\sigma=0.1$
gave us some concern. It makes $s_u(\bsx_i,\bsx_t)$
equal to $\exp(-50S^2_u)$ where $S^2_u$ is a standardized
mean square difference.  It seems that this would then
put the vast majority of the weight on the single closest
point to $\bsx_t$, namely $\bsx_t$ itself.  To address
this concern we tried a larger values of $\sigma$
($1.0$ and $10.0$) and found
them to perform worse than $\sigma=0.1$ in our adaptation
and so we work with the recommended default.

The GKW formula~\eqref{eq:originalgkw} is of interventional
type.  It uses hybrid points that are not observed.
This means it cannot be used in a model-free setting.
It may also reference impossible input combinations.
We replace the hybrid points in~\eqref{eq:originalgkw}
by observed data points, getting
\begin{align}\label{eq:ourgkw}
    \nu(u) = \frac{\sum_{i=1}^ns_u(\bsx_i,\bsx_t)f(\bsx_i)}{\sum_{i=1}^ns_u(\bsx_i,\bsx_t)}.
\end{align}
We are then able to derive Shapley values
from this model-free measure.

\section{Higher Order Taylor Expansion}\label{sec:higherorder}

We saw in Section~\ref{sec:asymptote}
that for the overwhelming majority of the unit
cube $[0,1]^d$ and of its corners $\{0,1\}^d$, the
Taylor expansion~\eqref{eq:formaltaylor} of the CSIG
function is convergent.  The first order term is
one where integrated gradients give the same values
as Shapley. It is then instructive to carry out the Taylor expansion
to one more term to see how a mismatch
between CS and IGCS arises.  Doing that we get
\begin{align*}
    \nu(\bsz)
     & \approx f(\bsx_t)
    +\sum_{i\ne t}f(\bsx_i)\prod_{j\in J_i}(1-z_j)
    -f(\bsx_t)\sum_{i\ne t}\prod_{j\in J_i}(1-z_j) \\
     & \quad- \sum_{i\ne t}
    f(\bsx_i)\sum_{i'\ne t}\prod_{j\in J_i}(1-z_j)
    \prod_{j\in J_{i'}}(1-z_j).
\end{align*}
There are two new terms. The first new term also
involves products over $j\in J_i$ of $1-z_j$
and so it also is one for which IGCS matches
CS.

The second new term need not be of that form.
It does however involve two such products
of factors $1-z_j$ and we know that each such
product is typically smaller than $\epsilon$.
That term is then $O(n^2\epsilon^2)$
over most of $[0,1]^d$
while the other terms are $O(n\epsilon)$.
For large $d$ it is easy to make $n\epsilon$ negligible.
For instance with $\epsilon = 1/n^2$ we get
$\Pr(\bsz\in H_\epsilon)\le n^4\exp( -\lfloor ad\rfloor/4)$.
This is negligible for $d\gg (16/a)\log(n)$.

The second new term is a weighted sum of
$$
    g_{i,i'}(\bsz)=\prod_{j\in J_i}(1-z_j)\prod_{j\in J_{i'}}(1-z_j)
    =\prod_{j\in J_i\cap J_{i'}}(1-z_j)
    \times \prod_{j\in J_i\cup J_{i'}}(1-z_j).
$$

It is the factors $j\in J_i\cap J_{i'}$ that
make IGCS differ from CS.
The cohort Shapley value for $g_{i,i'}$ is
$$
    \frac1{|J_i\cup J_{i'}|}\times 1\{j\in J_i\cup J_{i'}\}.
$$
This follows from symmetry and the fact that
$g_{i,i'}$ only takes values $0$ or $1$ at
the corners of the cube.

To find the IGCS value for $g_{i,i'}$, note that
\begin{align*}
    g_{i,i'}(\bsz)
     & = \prod_{j\in J_i\cap J_{i'}}(1-z_j)^2
    \prod_{j\in J_i\triangle J_{i'}}(1-z_j)
\end{align*}
where $\triangle$ denotes the symmetric difference.
For $j'\in J_i\cap J_{i'}$
\begin{align*}
    \frac{\partial}{\partial z_{j'}}g_{i,i'}(\bsz)
     & = -2\prod_{j\in J_i\cap J_{i'}\setminus\{j'\}}(1-z_j)^2
    \prod_{j\in J_i\triangle J_{i'}\cup\{j'\}}(1-z_j),\quad\text{so} \\
    \frac{\partial}{\partial z_{j'}}g_{i,i'}(z\bsone)
     & =-2(1-z)^{2|J_i\cap J_{i'}|+|J_i\triangle J_{i'}|-1}
\end{align*}
which has IGCS value
$$
    \frac2{2|J_i\cap J_{i'}|+|J_i\triangle J_{i'}|}.
$$
Similarly for $j'\in J_i\triangle J_{i'}$
\begin{align*}
    \frac{\partial}{\partial z_{j'}}g_{i,i'}(\bsz)
     & = -1\prod_{j\in J_i\cap J_{i'}}(1-z_j)^2
    \prod_{j\in J_i\triangle J_{i'}\setminus\{j'\}}(1-z_j),\quad\text{so} \\
    \frac{\partial}{\partial z_{j'}}g_{i,i'}(z\bsone)
     & =-(1-z)^{2|J_i\cap J_{i'}|+|J_i\triangle J_{i'}|-1}
\end{align*}
which has IGCS value
$$
    \frac1{2|J_i\cap J_{i'}|+|J_i\triangle J_{i'}|}.
$$
The consequence is that IGCS gives extra weight
to $j\in J_i\cup J_{i'}$ and less
to $j\in J_i\triangle J_{i'}$.
We expect that any given variable $j$ will
be in the double weight grouping for some
pairs $i,i'$ and in the single weight grouping
for others.  A variable $j$ where $x_{ij}$ is
rarely dissimilar to $x_{tj}$ is less likely
to get those double weightings.

The above argument has not used the full generality
of Corollary~\ref{cor:generalmatch}. Approximation
within the wider class of functions considered there
can only reduce the differences $|\psi_j-\phi_j|$
between IGCS and CS compared to the multilinear
functions in our Taylor expansion.

\section{Deletion and Insertion Measures}\label{sec:abc}

As noted earlier, there is no ground truth for variable importance.
Instead there are multiple definitions of what
makes a variable important and choosing a definition
involves some tradeoffs.
It is however possible to compute the value of a proxy measure
derived from an area under the curve (AUC) quantity
defined by \cite{petsiuk2018rise}.  Our presentation
of that measure is
based on the analysis of AUC in \cite{hama2022deletion}.

The proxy measure is about the quality with which the
variables' importances can be ranked. For an interventional
method like baseline Shapley, the quality of a ranking can
be measured as follows. We sort the variables from those
with the largest $\phi_j$ to those with the smallest $\phi_j$.
For $k=0,1,\dots,d$ let $\tilde\bsx_k$ be a hybrid point
with $\tilde x_{kj}=x_{tj}$ if variable $j$ is among the $k$
variables with the largest $\phi_j$ and let $\tilde x_{kj}=x_{bj}$
otherwise.  Then let $\tilde y_k = f(\tilde \bsx_k)$
and consider the piecewise linear curve
going through the points $(k,\tilde y_k)$ over the interval $[0,d]$.
The AUC criterion is the (signed) area under this curve.
If the variables have been well ranked then the AUC
of the insertion curve will be large.
\cite{hama2022deletion} describe an area between the curves
(ABC) that subtracts the area under a straight line
connecting $(0,\tilde y_0)$ to $(d,\tilde y_k)$.
If one method has a better AUC than another it will
also have a better ABC. That paper argues that ABC
is better suited to regression problems than AUC
which was devised for classification.

There is a similar deletion curve
found by changing the variables in the reverse order.
\cite{hama2022deletion} define an ABC for deletion as
the signed area under the straight line minus the area
under the deletion curve.

Ranking is not the same as estimating the $\phi_j$.
\cite{hama2022deletion} show an example where
ranking by the true Shapley values can give a lesser
ABC than some other ranking.  They also show that for
certain simple models, such as logistic regression and
naive Bayes, that ranking by Shapley value does indeed maximize
the ABC.  Since those models often perform well, we expect
that more sophisticated models will often make similar
predictions and then ABC will be a useful scorecard.

On our data examples it is impossible to compute the
true Shapley values so we use ABC measures as a guide
to compare CS to IGCS and some other methods.

Just as there is more than one way to define variable
importance, there is more than one way to define an ABC
measure to rank variable importance measures.
Instead of the interventional approach above one can
replace $\tilde y_k$ by the average value of $f(\bsx_i)$
over the cohort of points that are similar to $\bsx_t$
for the $k$ variables with the greatest value of $\phi_j$.
This provides a conditional ABC measure as an alternative
to an interventional ABC measure.

We expect on intuitive grounds that a method which attempts
to compute interventional Shapley values will attain a better
interventional ABC value than one that attempts to compute
conditional Shapley values and vice versa. While we do not
prove that this must happen,
it did happen in our numerical examples.
We have a strong preference to avoid interventional measures
as they can require evaluating $f$ at some wildly unrealistic
input values, but other researchers might accept those measures.

\end{document}